\renewcommand{\@thesubfigure}{\hskip\subfiglabelskip}
\newfont{\bb}{msbm10}
\theoremstyle{remark}
\newtheorem{remark}{Remark}[section]
\newtheorem{theorem}{\textbf{Theorem}}[section]
\newtheorem{definition}{Definition}[section]
\newtheorem{lemma}{Lemma}[section]
\renewcommand{\arraystretch}{1.5}
\begin{document}
\date{}
\author{Hongbing~Zhang\thanks{Corresponding author. E-mail:~zhb123abc@163.com.}~, Bing Zheng
\\
\small \text{School of Mathematics and Statistics,  Lanzhou University, Lanzhou 730000, P.R. China}}
\title{A novel non-convex minimax $p$-th order concave penalty function approach to low-rank tensor completion}
\maketitle
\begin{abstract}
The low-rank tensor completion (LRTC) problem aims to reconstruct a tensor from partial sample information, which has attracted significant interest in a wide range of practical applications such as image processing and computer vision. Among the various techniques employed for the LRTC problem, non-convex relaxation methods have been widely studied for their effectiveness in handling tensor singular values, which are crucial for accurate tensor recovery. While the minimax concave penalty (MCP) non-convex relaxation method has achieved promising results in tackling the LRTC problem and gained widely adopted, it exhibits a notable limitation: insufficient penalty on small singular values during the singular value handling process, resulting in inefficient tensor recovery. To address this issue and enhance recovery performance, a novel minimax $p$-th order concave penalty (MPCP) function is proposed. Based on this novel function, a tensor $p$-th order $\tau$ norm is proposed as a non-convex relaxation for tensor rank approximation, thereby establishing an MPCP-based LRTC model. Furthermore, theoretical convergence guarantees are rigorously established for the proposed method. Extensive numerical experiments conducted on multiple real datasets demonstrate that the proposed method outperforms the state-of-the-art methods in both visual quality and quantitative metrics.

\vskip10pt

\textbf{Keywords}: Low-rank tensor completion, tensor nuclear norm, non-convex relaxation, minimax concave penalty function, convergence analysis.
\end{abstract}

\section{Introduction}
Tensors, as higher-order generalizations of vectors and matrices, are widely acknowledged to capture intrinsic structural information in multimodal/multi-relational data more effectively. The processing of tensor data has been increasingly recognized as a pivotal component across diverse domains, such as image/video processing, magnetic resonance imaging data recovery \cite{wu2025paladin}, machine learning\cite{KONG2023109545,5032019109}, computer vision\cite{LIAO2023109624,YANG2022108311}, and pattern recognition\cite{jiang2023robust,10145070}. Nevertheless, in practical application scenarios, real-world tensor data are frequently corrupted due to unpredictable or unavoidable disturbances, thereby driving the demand for tensor completion techniques to restore multidimensional data integrity.

It is well-established that tensor rank minimization has been widely adopted as a prevalent approach for addressing low-rank tensor completion (LRTC) problem. However, the definition of tensor rank is not unique. The mainstream definitions of tensor rank are CANDECOMP/PARAFAC (CP) rank based on CP decomposition \cite{harshman1970foundations}, Tucker rank based on Tucker decomposition \cite{tucker1966some}, and tubal rank \cite{doi:10.1137/110837711} induced by tensor singular value decomposition (t-SVD) \cite{6416568}. Nevertheless, directly solving the CP rank is NP-hard \cite{139201360}. The tubal rank can better maintain the data structure compared with CP rank and Tucker rank. Subsequently, Zhang et al. \cite{6909886} defined the tensor nuclear norm (TNN) based on tensor tubal rank to solve the LRTC problem and obtained the advanced tensor recovery results. Recently, Zheng et al. \cite{2020170} proposed a new form of rank (N-tubal rank) based on tubal rank, which adopts a new unfold method of higher-order tensors into third-order tensors in various directions. This method enables t-SVD to be applied to higher-order tensor.  

Although the TNN method has become a commonly used method for solving the LRTC problem, it applies the same penalty to all singular values, which will leads to suboptimal solutions. In contrast, non-convex relaxation methods have garnered widespread attention due to their ability to handle tensor singular values more effectively. To achieve non-convex relaxation, various functions and methods have been employed by researchers, including the logarithmic function \cite{gong2013general}, the minimax concave penalty (MCP) function \cite{zhang2010nearly, 2132021245}, and the t-Schatten-p norm \cite{kong2018t}. These methods, with their distinct non-convex penalty mechanisms, offer greater flexibility in handling tensor singular values, leading to improved performance in tensor recovery problem. 

Recently, the MCP function as a non-convex relaxation has achieved promising results in the LRTC problem \cite{chen2024spatiotemporal,2132021245, qiu2021nonlocal,you2019nonconvex,ZHANG2024110253}. In \cite{2132021245}, the MCP function with different parameters were employed to penalize different singular values, enhancing overall effectiveness. However, as the problem scale increases, parameter selection for each singular value becomes extremely challenging. The bivariate equivalent MCP method proposed in \cite{ZHANG2024110253} enables parameters to be updated iteratively, significantly improving recovery performance. \cite{chen2024spatiotemporal} proposed a truncated MCP method to penalize singular values, achieving enhanced results by adjusting truncated rate and corresponding MCP function parameters. Essentially, these methods still rely on various parameter adjustment strategies of the MCP function to obtain good recovery results. Nevertheless, the inherent limitation of the MCP function, inadequate penalization of small singular values, remains unresolved by the above methods. To address this issue, a novel minimax $p$-th order concave penalty (MPCP) function is proposed. The proposed MPCP function not only preserves the MCP function’s property of protecting large singular values but also imposes stronger penalties on small singular values. Specifically, the main contributions of this paper are summarized as follows. 


First, a novel MPCP function is proposed to address the deficiency of the MCP function in insufficient penalization of small singular values. Key properties of the MPCP function are systematically analyzed and mathematically proven. To enable efficient optimization for LRTC applications, the proximal operator for the MPCP function is derived. Furthermore, tensor extensions of the MPCP function, including tensor $p$-th order $\tau$ norm definition and associated mathematical properties, are thoroughly investigated.

Second, a new MPCP-based LRTC model is developed, accompanied by a solving algorithm implemented through the alternating direction multipliers method (ADMM). Rigorous theoretical analysis of the algorithm's convergence is provided to guarantee the proposed method numerical stability.

Third, comprehensive experiments are conducted on multi-dimensional datasets, including third-order tensor, fourth-order tensor, and fifth-order tensor, to evaluate the proposed method's generalization capabilities. Experimental results demonstrate that the proposed MPCP method significantly outperforms the state-of-the-art methods. A dedicated comparative analysis against MCP-based variants further reveals the MPCP method's superior performance, thereby corroborating our theoretical analysis.

The rest of the paper is organized as follows. In Section 2, some preliminary knowledge and background of the tensors are given. Definitions and theorems about the MPCP function are presented in Section 3. The main results, including the proposed model and algorithm, are shown in Section 4. In Section 5, we give the convergency analysis for the proposed method. The results of extensive experiments and discussion are presented in Section 6. Conclusions are drawn in Section 7.
\section{Preliminaries}
In this section, we give some basic notations and briefly introduce some definitions used throughout the paper. Generally, a lowercase letter and an uppercase letter denote a vector $b$ and a matrix $B$, respectively. A calligraphic upper case letter $\mathcal{B}\in \mathbb{R}^{\mathit{J}_{1}\times\mathit{J}_{2}\times\cdots\times\mathit{J}_{N}}$ denotes an $N$th-order tensor and $\mathcal{B}_{j_{1},j_{2},\cdots,j_{N}}$ is its $(j_{1},j_{2},\cdots,j_{N})$-th element. The Frobenius norm of a tensor is defined as $\|\mathcal{B}\|_{F}=(\sum_{j_{1},j_{2},\cdots,j_{N}}\mathcal{B}_{j_{1},j_{2},\cdots,j_{N}}^{2})^{1/2}$.
For a third-order tensor $\mathcal{B}\in\mathbb{R}^{\mathit{J}_{1}\times\mathit{J}_{2}\times\mathit{J}_{3}}$,  we use the Matlab notation $\mathcal{B}(k,:,:)$, $\mathcal{B}(:,i,:)$ and $\mathcal{B}(:,:,j):=\mathcal{B}^{(j)}$ to denote its $k$th horizontal slice, $i$th lateral slice and $j$th frontal slice, respectively, and $\mathcal{B}(k,i,:)$ is denoted $(k,i)$-th tube of $\mathcal{B}$.
\begin{definition}[Tensor mode-$n$ unfolding and folding \cite{12345152009}]
	The mode-$n$ unfolding of an $N$th-order tensor $\mathcal{B}\in \mathbb{R}^{\mathit{J}_{1}\times\mathit{J}_{2}\times\cdots\times\mathit{J}_{N}}$ is denoted as a matrix $\mathcal{B}_{(n)}\in\mathbb{R}^{\mathit{J}_{n}\times\mathit{J}_{1}\cdots\mathit{J}_{n-1}\mathit{J}_{n+1}\cdots\mathit{J}_{N}} $. Tensor element $(j_{1}, j_{2},...,j_{N} )$ maps to matrix element $(j_{n}, l)$, where
	\begin{equation*}
		l=1+\sum_{i=1,i\neq n}^{N}(j_{i}-1)\mathit{L}_{i}\quad \text{with}\quad \mathit{L}_{i}=\prod_{k=1,k\neq n}^{i-1}\mathit{J}_{k}. 
	\end{equation*}
	The mode-$n$ unfolding operator and its inverse are respectively denoted as ${\tt unfold}_{n}$ and ${\tt fold}_{n}$, and they satisfy $\mathcal{B}={\tt fold}_{n}(\mathcal{B}_{(n)})={\tt fold}_{n}({\tt unfold}_{n}(\mathcal{B}))$.
\end{definition}
\begin{definition}[The mode-$n$ product of tensor \cite{12345152009}]
	The mode-$n$ product of an $N$th-order tensor  $\mathcal{B}\in \mathbb{R}^{\mathit{J}_{1}\times\mathit{J}_{2}\times\cdots\times\mathit{J}_{N}}$ with matrix $M\in\mathbb{R}^{\mathit{I}_{n}\times \mathit{J}_{n}}$ is denoted by $\mathcal{C}=\mathcal{B}\times_{n}M$. Elementwise, we have
	\begin{equation*}
		\mathcal{C}=\mathcal{B}\times_{n}M\in\mathbb{R}^{\mathit{J}_{1}\times\mathit{J}_{2}\times\cdots\mathit{J}_{n-1}\times\mathit{I}_{n}\times\mathit{J}_{n+1}\cdots\mathit{J}_{N}}\quad\Leftrightarrow\quad \mathcal{C}_{(n)}=M  \mathcal{B}_{(n)}. 
	\end{equation*} 
\end{definition}
Now we review the Discrete Fourier Transformation (DFT) for tensor-tensor product. For a third-order tensor $\mathcal{B}\in\mathbb{C}^{\mathit{J}_{1}\times\mathit{J}_{2}\times\mathit{J}_{3}}$, let $\bar{\mathcal{B}}\in\mathbb{C}^{\mathit{J}_{1}\times\mathit{J}_{2}\times\mathit{J}_{3}}$ be the result of DFT of $\mathcal{B}$ along the third mode. Specifically, let $F=[\textbf{f}_{1},\dots,\textbf{f}_{\mathit{J}_{3}}]\in\mathbb{R}^{\mathit{J}_{3}\times\mathit{J}_{3}}$, where $	\textbf{f}_{k}=[\omega^{0\times(k-1)};\dots;\omega^{(\mathit{J}_{3}-1)\times(k-1)}]\in\mathbb{R}^{\mathit{J}_{3}}$, $ \omega=e^{-\frac{2\pi t}{\mathit{J}_{3}}}$ and $t=\sqrt{-1}$. Then $\bar{\mathcal{B}}(k,i,:)=F\mathcal{B}(k,i,:),$ which can be computed by Matlab command $\bar{\mathcal{B}}=\text{\tt fft}(\mathcal{B},[~],3)=\mathcal{B}\times_{3}F$. Furthermore, the inverse DFT is computed by command $\text{\tt ifft}$ satisfying $\mathcal{B}=\text{\tt ifft}(\bar{\mathcal{B}},[~],3)$. For a third-order tensor $\mathcal{B}\in\mathbb{R}^{\mathit{J}_{1}\times\mathit{J}_{2}\times\mathit{J}_{3}}$, the block circulation operation is defined as
\begin{equation*}
	\text{\tt bcirc}(\mathcal{B}):=
	\begin{pmatrix}
		\mathcal{B}^{(1)}& \mathcal{B}^{(\mathit{J}_{3})}&\dots& \mathcal{B}^{(2)}&\\
		\mathcal{B}^{(2)}& \mathcal{B}^{(1)}&\dots& \mathcal{B}^{(3)}&\\
		\vdots&\vdots&\ddots&\vdots&\\
		\mathcal{B}^{(\mathit{J}_{3})}& \mathcal{B}^{(\mathit{J}_{3}-1)}&\dots& \mathcal{B}^{(1)}&
	\end{pmatrix}\in\mathbb{R}^{\mathit{J}_{1}\mathit{J}_{3}\times\mathit{J}_{2}\mathit{J}_{3}}.
\end{equation*}
The block diagonalization operation and its inverse operation are respectively defined as $\text{\tt bdiag}(\mathcal{B}):=\text{\tt diag}(\mathcal{B}^{(1)},\mathcal{B}^{(2)},\cdots,\mathcal{B}^{(\mathit{J}_{3})})\in\mathbb{R}^{\mathit{J}_{1}\mathit{J}_{3}\times\mathit{J}_{2}\mathit{J}_{3}}$ and $\text{\tt bdfold}(\text{\tt bdiag}(\mathcal{B})):=\mathcal{B}$. The block vectorization operation and its inverse operation are respectively defined as 
\begin{equation*}
	\text{\tt bvec}(\mathcal{B}):=\begin{pmatrix}
		\mathcal{B}^{(1)}\\\mathcal{B}^{(2)}\\\vdots\\\mathcal{B}^{(\mathit{J}_{3})}
	\end{pmatrix}\in\mathbb{R}^{\mathit{J}_{1}\mathit{J}_{3}\times\mathit{J}_{2}}~\text{and}~\text{\tt bvfold}(\text{\tt bvec}(\mathcal{B})):=\mathcal{B}.
\end{equation*} 

\begin{definition}[T-product \cite{6416568}]
	Let $\mathcal{B}\in\mathbb{R}^{\mathit{J}_{1}\times\mathit{J}_{2}\times\mathit{J}_{3}}$ and $\mathcal{C}\in\mathbb{R}^{\mathit{J}_{2}\times\mathit{I}\times\mathit{J}_{3}}$. The t-product $\mathcal{B}\ast\mathcal{C}$ is defined to be a tensor of size $\mathit{J}_{1}\times\mathit{I}\times\mathit{J}_{3}$,
	\begin{equation*}
		\mathcal{B}\ast\mathcal{C}:=\text{\tt bvfold}(\text{\tt bcirc}(\mathcal{B})\text{\tt bvec}(\mathcal{C})).
	\end{equation*}
	Since that the circular convolution in the spatial domain is equivalent to the element-wise multiplication in the Fourier domain, the t-product between two tensors $\mathcal{C}=\mathcal{A}\ast\mathcal{B}$ is equivalent to
	$\bar{\mathcal{C}}=\text{\tt bdfold}(\text{\tt bdiag}(\bar{\mathcal{A}})\text{\tt bdiag}(\bar{\mathcal{B}})).$
\end{definition} 
The conjugate transpose of a tensor $\mathcal{B}\in\mathbb{C}^{\mathit{J}_{1}\times\mathit{J}_{2}\times\mathit{J}_{3}}$ is the tensor $\mathcal{B}^{H}\in\mathbb{C}^{\mathit{J}_{2}\times\mathit{J}_{1}\times\mathit{J}_{3}}$ \cite{6416568} obtained by conjugate transposing each of the frontal slices and then reversing the order of transposed frontal slices 2 through $\mathit{J}_{3}$. 
Besides, the identity tensor $\mathcal{I}\in\mathbb{R}^{\mathit{J}_{1}\times\mathit{J}_{1}\times\mathit{J}_{3}}$ is the tensor whose first frontal slice is the $\mathit{J}_{1}\times\mathit{J}_{1}$ identity matrix, and whose other frontal slices are all zeros. It is easy to get $\mathcal{B}\ast\mathcal{I}=\mathcal{B}=\mathcal{I}\ast\mathcal{B}$. A tensor $\mathcal{Q}\in\mathbb{R}^{\mathit{J}_{1}\times\mathit{J}_{1}\times\mathit{J}_{3}}$ is orthogonal if it satisfies $\mathcal{Q}\ast\mathcal{Q}^{H}=\mathcal{Q}^{H}\ast\mathcal{Q}=\mathcal{I}.$ A third-order tensor is called f-diagonal if each of its frontal slices is a diagonal matrix.
\begin{lemma}[T-SVD \cite{8606166}]
	Let $\mathcal{B}\in\mathbb{R}^{\mathit{J}_{1}\times\mathit{J}_{2}\times\mathit{J}_{3}}$ be a third-order tensor, then it can be factored as 
	\begin{equation*}
		\mathcal{B}=\mathcal{U}\ast\mathcal{S}\ast\mathcal{V}^{H},
	\end{equation*}
	where $\mathcal{U}\in\mathbb{R}^{\mathit{J}_{1}\times\mathit{J}_{1}\times\mathit{J}_{3}}$ and $\mathcal{V}\in\mathbb{R}^{\mathit{J}_{2}\times\mathit{J}_{2}\times\mathit{J}_{3}}$ are orthogonal tensors, and $\mathcal{S}\in\mathbb{R}^{\mathit{J}_{1}\times\mathit{J}_{2}\times\mathit{J}_{3}}$ is an f-diagonal tensor.
\end{lemma}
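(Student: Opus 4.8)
The plan is to reduce the claimed tensor factorization to a family of ordinary matrix singular value decompositions performed in the Fourier domain, and then transport the result back by the inverse DFT. First I would pass to the third-mode DFT, writing $\bar{\mathcal{B}}=\text{\tt fft}(\mathcal{B},[~],3)$, and recall the algebraic identity recorded right after the definition of the t-product: $\mathcal{C}=\mathcal{A}\ast\mathcal{B}$ holds if and only if $\text{\tt bdiag}(\bar{\mathcal{C}})=\text{\tt bdiag}(\bar{\mathcal{A}})\,\text{\tt bdiag}(\bar{\mathcal{B}})$. One also checks directly that the conjugate transpose corresponds to the block-slicewise conjugate transpose in the Fourier domain, i.e.\ $\text{\tt bdiag}(\overline{\mathcal{B}^{H}})=\text{\tt bdiag}(\bar{\mathcal{B}})^{H}$, and that $\text{\tt bdiag}(\bar{\mathcal{I}})$ is the identity matrix. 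Hence it suffices to produce tensors $\bar{\mathcal{U}},\bar{\mathcal{S}},\bar{\mathcal{V}}$ with $\text{\tt bdiag}(\bar{\mathcal{B}})=\text{\tt bdiag}(\bar{\mathcal{U}})\,\text{\tt bdiag}(\bar{\mathcal{S}})\,\text{\tt bdiag}(\bar{\mathcal{V}})^{H}$ such that $\text{\tt bdiag}(\bar{\mathcal{U}})$ and $\text{\tt bdiag}(\bar{\mathcal{V}})$ are unitary and $\text{\tt bdiag}(\bar{\mathcal{S}})$ is rectangular diagonal; applying $\text{\tt ifft}$ along the third mode then turns these into $\mathcal{B}=\mathcal{U}\ast\mathcal{S}\ast\mathcal{V}^{H}$ with $\mathcal{U}\ast\mathcal{U}^{H}=\mathcal{U}^{H}\ast\mathcal{U}=\mathcal{I}$, the analogous identities for $\mathcal{V}$, and, since the DFT acts tubewise so that an off-diagonal tube of $\mathcal{S}$ vanishes exactly when the corresponding tube of $\bar{\mathcal{S}}$ does, the f-diagonality of $\mathcal{S}$.

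Second, for each frontal-slice index $i=1,\dots,\mathit{J}_{3}$ I would take a full matrix SVD $\bar{\mathcal{B}}^{(i)}=\bar{U}^{(i)}\bar{S}^{(i)}(\bar{V}^{(i)})^{H}$, with $\bar{U}^{(i)}\in\mathbb{C}^{\mathit{J}_{1}\times \mathit{J}_{1}}$ and $\bar{V}^{(i)}\in\mathbb{C}^{\mathit{J}_{2}\times \mathit{J}_{2}}$ unitary and $\bar{S}^{(i)}\in\mathbb{R}^{\mathit{J}_{1}\times \mathit{J}_{2}}$ diagonal with nonnegative entries, and stack these slices into $\bar{\mathcal{U}},\bar{\mathcal{S}},\bar{\mathcal{V}}$. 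Block-diagonalizing then gives precisely the factorization of $\text{\tt bdiag}(\bar{\mathcal{B}})$ required above, because $\text{\tt bdiag}$ of such a stacked tensor is the block-diagonal matrix of the slices, and a block-diagonal matrix with unitary (resp.\ diagonal) blocks is unitary (resp.\ diagonal). By the reduction of the previous paragraph this already produces the factorization $\mathcal{B}=\mathcal{U}\ast\mathcal{S}\ast\mathcal{V}^{H}$ with $\mathcal{U},\mathcal{V}$ orthogonal and $\mathcal{S}$ f-diagonal, \emph{a priori} only over the complex numbers.

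The part that needs care --- and the main obstacle --- is ensuring that $\mathcal{U},\mathcal{S},\mathcal{V}$ come out real, as the statement asserts. Since $\mathcal{B}$ is a real tensor, its third-mode DFT satisfies the conjugate-symmetry relation $\bar{\mathcal{B}}^{(i)}=\overline{\bar{\mathcal{B}}^{(\mathit{J}_{3}-i+2)}}$ for $i=2,\dots,\mathit{J}_{3}$, while $\bar{\mathcal{B}}^{(1)}$, and also $\bar{\mathcal{B}}^{(\mathit{J}_{3}/2+1)}$ when $\mathit{J}_{3}$ is even, is real. I would exploit this by choosing \emph{real} SVDs for the self-conjugate slices, and then, for each conjugate pair $\{i,\,\mathit{J}_{3}-i+2\}$, defining the SVD of slice $\mathit{J}_{3}-i+2$ to be the entrywise complex conjugate of the one already chosen for slice $i$; this is legitimate because $\overline{\bar{U}^{(i)}}\,\overline{\bar{S}^{(i)}}\,(\overline{\bar{V}^{(i)}})^{H}$ is an SVD of $\overline{\bar{\mathcal{B}}^{(i)}}=\bar{\mathcal{B}}^{(\mathit{J}_{3}-i+2)}$. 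With this synchronized choice the tensors $\bar{\mathcal{U}},\bar{\mathcal{S}},\bar{\mathcal{V}}$ inherit the same conjugate symmetry as $\bar{\mathcal{B}}$, and the standard fact that a tensor whose frontal slices obey this Fourier-domain symmetry has a real inverse DFT shows that $\mathcal{U}\in\mathbb{R}^{\mathit{J}_{1}\times \mathit{J}_{1}\times \mathit{J}_{3}}$, $\mathcal{S}\in\mathbb{R}^{\mathit{J}_{1}\times \mathit{J}_{2}\times \mathit{J}_{3}}$ and $\mathcal{V}\in\mathbb{R}^{\mathit{J}_{2}\times \mathit{J}_{2}\times \mathit{J}_{3}}$. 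Assembling these observations completes the argument; the only genuinely delicate point is the bookkeeping for the conjugate-paired slices and the parity cases $i=1$ and $i=\mathit{J}_{3}/2+1$.
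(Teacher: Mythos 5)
The paper does not prove this lemma at all: it is imported verbatim from the cited reference \cite{8606166} (and originally \cite{6416568}), so there is no in-paper argument to compare against. Your proposal is, however, a correct and complete reconstruction of the standard proof given in those references: reduce to slicewise matrix SVDs of $\bar{\mathcal{B}}$ via the equivalence between the t-product and block-diagonal multiplication in the Fourier domain, and recover realness of $\mathcal{U},\mathcal{S},\mathcal{V}$ by choosing the slice SVDs compatibly with the conjugate symmetry $\bar{\mathcal{B}}^{(i)}=\overline{\bar{\mathcal{B}}^{(\mathit{J}_{3}-i+2)}}$ of the DFT of a real tensor. You correctly identified the one genuinely delicate point, namely that an arbitrary choice of slice SVDs would generally produce a complex factorization, and your fix (real SVDs on the self-conjugate slices, conjugated SVDs on paired slices, with the even/odd $\mathit{J}_{3}$ cases handled separately) is exactly the device used in the literature. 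No gaps.
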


\begin{definition}[Tensor tubal rank \cite{6909886}]
	The tubal rank of a tensor $\mathcal{B}\in\mathbb{R}^{\mathit{J}_{1}\times\mathit{J}_{2}\times\mathit{J}_{3}}$, denoted as ${\tt rank}_{t}(\mathcal{B})$, is defined to be the number of non-zero singular tubes of $\mathcal{S}$, where $\mathcal{S}$ comes from the t-SVD of $\mathcal{B}:\mathcal{B}=\mathcal{U}\ast\mathcal{S}\ast\mathcal{V}^{H}$. That is ${\tt rank}_{t}(\mathcal{B})=\#\{j:\mathcal{S}(j,j,:)\neq0\}.$
\end{definition}
\begin{definition}[Tensor nuclear norm (TNN)]
	The tensor nuclear norm of a tensor $\mathcal{B}\in\mathbb{R}^{\mathit{J}_{1}\times\mathit{J}_{2}\times\mathit{J}_{3}}$, denoted as $\|\mathcal{B}\|_{TNN}$, is defined as the sum of the singular values of all the frontal slices of $\bar{\mathcal{B}}$, i.e.,
	\begin{equation*}
		\|\mathcal{B}\|_{TNN}:=\frac{1}{\mathit{J}_{3}}\sum_{j=1}^{\mathit{J}_{3}}\|\bar{\mathcal{B}}^{(j)}\|_{\ast}=\frac{1}{\mathit{J}_{3}}\sum_{j=1}^{\mathit{J}_{3}}\sum_{i=1}^{R}\sigma_{i}(\bar{\mathcal{B}}^{(j)}),
	\end{equation*}
	where $\bar{\mathcal{B}}^{(j)}$ is the $j$-th frontal slice of $\bar{\mathcal{B}}$, with $\bar{\mathcal{B}}=\text{\tt fft}(\mathcal{B},[~],3)$; $R=\min(\mathit{J}_{1},\mathit{J}_{2})$; $\sigma_{j}(\bar{\mathcal{B}}^{(j)})$ is the $i$-th singular value of $\bar{\mathcal{B}}^{(j)}$.
\end{definition}

In order to simplify the representation of the correlation between pairs of dimensions of a tensor, a new tensor mode-$q$ unfolding and folding definition has been proposed to replace the tensor mode-$k_1,k_2$ unfolding and folding definition in \cite{2020170}.

\begin{definition}[Tensor mode-$q$ unfolding and folding]
	For an $N$th-order tensor $\mathcal{B}$, its mode-$q$ unfolding is a third-order tensor denoted by $\mathcal{B}_{<q>}\in\mathbb{R}^{\mathit{J}_{q_{1}}\times\mathit{J}_{q_{2}}\times\prod_{s\neq q_{1},q_{2}}\mathit{J}_{s}}$. Mathematically, the  $(j_{1}, j_{2},...,j_{N} )$-th element of $\mathcal{B}$ maps to the $(j_{q_{1}},j_{q_{2}},k)$-th element of $\mathcal{B}_{<q>}$, where
	\begin{alignat*}{2}
		&k=1+\sum_{s=1,s\neq q_{1}, q_{2}}^{N}(j_{s}-1)\prod_{m=1,m\neq q_{1}, q_{2}}^{s-1}\mathit{J}_{m},
		\\&q=(q_1-1)(N-q_1/2)+q_2-q_1,~~1\leqslant q_1<q_2 \leqslant N.	
	\end{alignat*}
	The mode-$q$ unfolding operator and its inverse operation are respectively denoted as $\mathcal{B}_{<q>}:=\text{\tt t-unfold}(\mathcal{B},q)$ and $\mathcal{B}:=\text{\tt t-fold}(\mathcal{B}_{<q>},q)$. Additionally, $q$ ranges from $1$ to $N(N-1)/2$.
\end{definition}
\begin{definition}[N-tubal rank \cite{2020170}]
	The N-tubal rank of an $N$th-order tensor $\mathcal{B}$ is defined as a vector, the elements of which
	contain the tubal rank of all mode-$q$ unfolding tensors, i.e.,
	\begin{equation*}
		N-{\tt rank}_{t}({\mathcal{B}}):=({\tt rank}_{t}({\mathcal{B}}_{<1>}),{\tt rank}_{t}({\mathcal{B}}_{<2>}),\cdots,{\tt rank}_{t}({\mathcal{B}}_{<N(N-1)/2>}))\in\mathbb{R}^{N(N-1)/2}.
	\end{equation*}
\end{definition}
\begin{definition}[MCP function \cite{zhang2010nearly}]
	Let $\tau>1$. The MCP function $h_{\tau}:\mathbb{R}\to\mathbb{R}_{\geqslant0}$ is defined as 
	\begin{equation}
		h_{\tau}(x)=\begin{cases}
			\rvert x\rvert-\frac{x^{2}}{2\tau},\quad& \rvert x\rvert\leqslant\tau,
			\\\frac{\tau}{2},\qquad &\rvert x\rvert>\tau,
		\end{cases} \label{scalarMCP}
	\end{equation}where $\mathbb{R}_{\geqslant0}$ denotes the domain of non-negative real numbers.
\end{definition}
Based on the definition of the MCP function, it can be observed that the function controls the penalty threshold in the LRTC problem by adjusting the $\tau$ value. As $\tau$ increases, the penalty on small singular values gradually intensifies, but the protection range for large singular values narrows. Conversely, decreasing $\tau$ can expand the protection range for large singular values, but it weakens the penalty on small singular values. 
\section{Minimax $p$-th order concave penalty function}
In this section, to address the limitations of the MCP function,  the definition of the MPCP function is proposed.
\begin{definition}[MPCP function]
	Let $\tau>1,$ $0<p < 1$. The MPCP function $\psi_{\tau,p}:\mathbb{R}\to\mathbb{R}_{\geqslant0}$ is defined as 
	\begin{equation}
		\psi_{\tau,p}(x)=\begin{cases}
			\rvert x\rvert-\frac{\rvert x\rvert^{1+p}}{(1+p)\tau}, \quad&\rvert x\rvert\leqslant \tau^{1/p},
			\\\frac{p\tau^{1/p}}{1+p}, &\rvert x\rvert> \tau^{1/p}.
		\end{cases}\label{scalarMPCP}
	\end{equation}
\end{definition}
Compared to the MCP function, the proposed MPCP function introduces an additional $p$ parameter. It is evident that when $p=1$, the MPCP function is identical to the MCP function, while for $p<1$, there is a clear difference between the MPCP function and the MCP function. This indicates that the MPCP function has a broader definition than the MCP function. Additionally, the MPCP function can enhance the penalty on small singular values by decreasing the $p$ value. Fig. \ref{FUCT} shows the function plot for the MCP function with $\tau = 2$, as well as the function plots for the MPCP function with $p$ values ranging from $0.1$ to $0.9$ and $\tau=2^{1/p}$. All plots have the boundary at $x=\pm2$. From the figure, it is clear that as the $p$ value decreases, the penalty on $x$ increases, which demonstrates that the MPCP function can impose a stronger penalty on small singular values compared to the MCP function in the LRTC problem. In the LRTC problem, larger singular values typically represent important information, such as contours and smooth regions, while smaller singular values are primarily composed of noise or outliers \cite{Gu_2014_CVPR}. Therefore, protecting large singular values and enhancing the penalty on smaller singular values are both crucial for ensuring recovery performance and improving image restoration. The MPCP function effectively possesses both of these properties. Additionally, we further analyzed some properties of the MPCP function. Since the MPCP function is symmetric function, it is sufficient to analyze its properties on $[ 0,+\infty )$.

\begin{figure}[!tbh]
	\centering
	\includegraphics[width=0.8\linewidth]{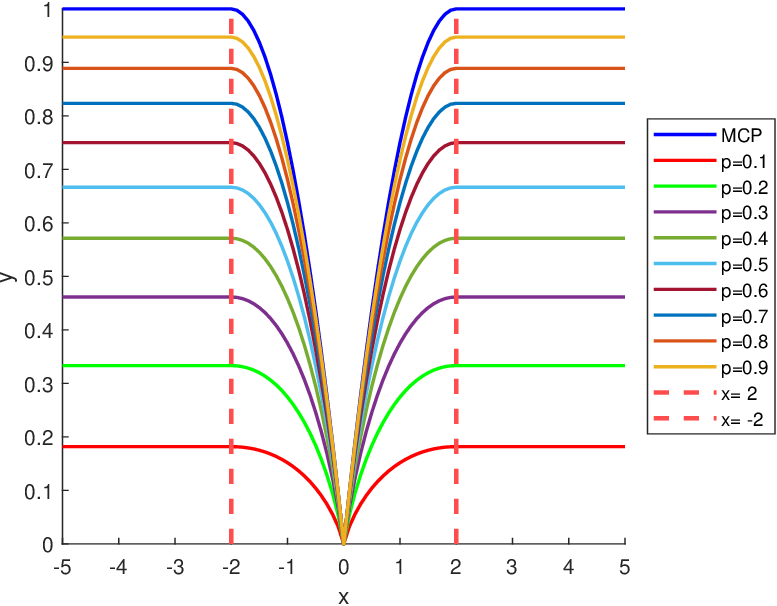}
	\caption {Plots for $ x=2$, $ x=-2$, the MCP and MPCP functions. Here $\tau=2$ for the MCP function; $p=\{0.1,0.2,\cdots,0.9\}$, $\tau=2^{1/p}$ for the MPCP function. }
	\label{FUCT}
\end{figure}

\begin{theorem}\label{propert}
	The MPCP function defined in \eqref{scalarMPCP} satisfies the following properties:
	\\\textbf{1.} $\psi_{\tau,p}(x)$  is continuous, smooth on $[ 0,+\infty ) $ and 
	$
	\psi_{\tau,p}(0)=0,\lim\limits_{x\to+\infty}\frac{\psi_{\tau,p}(x)}{x}=0;$
	\\\textbf{2.} $\psi_{\tau,p}(x)$ is monotonically non-decreasing and concave on $[ 0,+\infty ) $;
	\\\textbf{3.} $\psi_{\tau,p}(x)$ is subadditive;
	\\\textbf{4.} $\psi'_{\tau,p}(x)$ is non-negativity and monotonicity non-increasing on $ [ 0,+\infty) $. Moreover, it is Lipschitz bounded, i.e., there exists constant $L(\ell)$ such that
	\begin{equation*}
		\rvert \psi'_{\tau,p}(x)-\psi'_{\tau,p}(y)\rvert\leqslant L(\ell)\rvert x-y\rvert;
	\end{equation*}
	\textbf{5.} Especially, for the MPCP function, it is increasing as the parameter $\tau$ increases, and
	\begin{equation*}
		\lim\limits_{\tau\to+\infty}\psi_{\tau,p}(x)=\rvert x\rvert,
	\end{equation*}
	which indicates that as the parameter $\tau$ increases, the MPCP function becomes closer to $\rvert x\rvert$. 
\end{theorem}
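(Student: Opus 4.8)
The plan is to verify the five properties in order, exploiting the symmetry of $\psi_{\tau,p}$ so that everything reduces to analysis on $[0,\tau^{1/p}]$ and $(\tau^{1/p},+\infty)$, with the only delicate point being the junction $x=\tau^{1/p}$. For Property 1, I would first check continuity at the breakpoint: evaluating the upper branch $|x|-\frac{|x|^{1+p}}{(1+p)\tau}$ at $x=\tau^{1/p}$ gives $\tau^{1/p}-\frac{\tau^{(1+p)/p}}{(1+p)\tau}=\tau^{1/p}-\frac{\tau^{1/p}}{1+p}=\frac{p\tau^{1/p}}{1+p}$, matching the constant lower branch, so $\psi_{\tau,p}$ is continuous. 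Smoothness on $[0,+\infty)$ means $C^1$: the derivative of the upper branch is $1-\frac{|x|^{p}}{\tau}$, which at $x=\tau^{1/p}$ equals $1-1=0$, matching the derivative $0$ of the constant branch; $\psi_{\tau,p}(0)=0$ is immediate, and $\lim_{x\to\infty}\psi_{\tau,p}(x)/x=\lim \frac{p\tau^{1/p}}{(1+p)x}=0$.

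For Property 2, on $[0,\tau^{1/p}]$ the derivative $\psi'_{\tau,p}(x)=1-\frac{x^{p}}{\tau}\geq 1-\frac{(\tau^{1/p})^{p}}{\tau}=0$, so $\psi_{\tau,p}$ is non-decreasing there, and it is constant afterward, giving monotonicity on all of $[0,+\infty)$. For concavity it suffices that $\psi'_{\tau,p}$ is non-increasing: on $[0,\tau^{1/p}]$, $\psi''_{\tau,p}(x)=-\frac{p\,x^{p-1}}{\tau}\leq 0$ since $0<p<1$, and past the breakpoint $\psi'_{\tau,p}\equiv 0$; since $\psi'_{\tau,p}$ is continuous (by Property 1) and non-increasing on each piece, it is non-increasing globally, hence $\psi_{\tau,p}$ is concave. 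Property 4 is essentially a repackaging of this: non-negativity of $\psi'_{\tau,p}$ was shown in Property 2, monotone non-increase is the concavity computation, and the Lipschitz bound follows because $\psi'_{\tau,p}$ is Lipschitz on $[0,\tau^{1/p}]$ with constant $\sup |\psi''_{\tau,p}|$. The main obstacle here is that $|\psi''_{\tau,p}(x)|=\frac{p\,x^{p-1}}{\tau}\to\infty$ as $x\to 0^+$ when $p<1$, so $\psi'_{\tau,p}$ is not globally Lipschitz on $[0,+\infty)$ in the naive sense; I expect the statement is meant with a restriction to $[\ell,+\infty)$ for some $\ell>0$ (the $L(\ell)$ notation suggests exactly this), on which $\sup_{x\geq \ell}|\psi''_{\tau,p}(x)| = \frac{p\,\ell^{p-1}}{\tau}=:L(\ell)$ is finite, and the mean value theorem then gives the bound. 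I would state this restriction explicitly.

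For Property 3 (subadditivity, $\psi_{\tau,p}(x+y)\leq \psi_{\tau,p}(x)+\psi_{\tau,p}(y)$ for $x,y\geq 0$), I would use the standard fact that a concave function $f$ on $[0,+\infty)$ with $f(0)\geq 0$ is subadditive: for $x,y\geq 0$ with $x+y>0$, write $x=\frac{x}{x+y}(x+y)+\frac{y}{x+y}\cdot 0$ and apply concavity to get $f(x)\geq \frac{x}{x+y}f(x+y)+\frac{y}{x+y}f(0)$, similarly for $f(y)$, then add; with $f(0)=0$ this yields $f(x)+f(y)\geq f(x+y)$. This needs only Properties 1 and 2, so it slots in cleanly. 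Finally, for Property 5, I would fix $x$ and differentiate $\psi_{\tau,p}(x)$ with respect to $\tau$: on the branch $|x|\leq \tau^{1/p}$ one has $\partial_\tau\psi_{\tau,p}(x)=\frac{|x|^{1+p}}{(1+p)\tau^{2}}>0$, and on the other branch $\partial_\tau\left(\frac{p\tau^{1/p}}{1+p}\right)=\frac{1}{1+p}\tau^{1/p-1}>0$, and one checks these agree across the moving breakpoint, so $\psi_{\tau,p}(x)$ is increasing in $\tau$; for the limit, once $\tau^{1/p}>|x|$ (which holds for all large $\tau$) we have $\psi_{\tau,p}(x)=|x|-\frac{|x|^{1+p}}{(1+p)\tau}\to |x|$ as $\tau\to\infty$. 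The whole argument is elementary single-variable calculus; the only genuine subtlety is the Lipschitz clause in Property 4, which I would handle by the domain restriction to $[\ell,+\infty)$ as above.
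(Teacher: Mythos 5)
Your proposal is correct and follows essentially the same route as the paper on all five properties: the same branch-matching check at $x=\tau^{1/p}$ for continuity and $C^1$ smoothness, the same sign analysis of $\psi'_{\tau,p}$ and $\psi''_{\tau,p}$ for monotonicity and concavity, the identical convex-combination trick $x=\frac{x}{x+y}(x+y)+\frac{y}{x+y}\cdot 0$ for subadditivity, and the same two-branch differentiation in $\tau$ for Property 5.

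The one place you genuinely diverge is the Lipschitz clause of Property 4, and there your treatment is the more careful one. The paper asserts that $\psi''_{\tau,p}$ is non-positive and monotonically non-decreasing on $(0,+\infty)$ and concludes it "is always bounded," setting $L(\ell):=\max\{\psi''_{\tau,p}(x),\psi''_{\tau,p}(y)\}$ for arbitrary $x,y\in(0,+\infty)$. As you point out, $|\psi''_{\tau,p}(x)|=p x^{p-1}/\tau\to+\infty$ as $x\to 0^{+}$ when $0<p<1$, so $\psi'_{\tau,p}$ is not globally Lipschitz on $[0,+\infty)$, and the paper's chosen "constant" is not even a uniform bound (nor is it non-negative as written). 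Your fix --- restricting to $[\ell,+\infty)$ with $L(\ell)=\sup_{x\geq\ell}|\psi''_{\tau,p}(x)|=p\ell^{p-1}/\tau$ and invoking the mean value theorem --- is exactly what the notation $L(\ell)$ in the theorem statement appears to intend, and you are right to state the domain restriction explicitly; this repairs a real gap in the paper's argument rather than departing from its strategy.
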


\begin{proof}
	\textbf{1.}  $\lim_{-}\psi_{\tau,p}(\tau^{1/p})=\lim_{+}\psi_{\tau,p}(\tau^{1/p})=p\tau^{1/p}/(1+p)$ and $\psi'_{-\tau,p}(\tau^{1/p})=\psi'_{+\tau,p}(\tau^{1/p})=0$, thus it's continuous and smooth;
	At last, the conclusions $\psi_{\tau,p}(0)=0$ and $\lim\limits_{x\to+\infty}\frac{\psi_{\tau,p}(x)}{x}=0$ are easily to verified though the formulas in \eqref{scalarMPCP}.
	\\\textbf{2.} This conclusion is direct from its first order and second order derivative function. Its first order and second order derivative functions are as follows:
	\begin{alignat}{2}
		&\psi'_{\tau,p}(x)=\begin{cases}
			1-\frac{\rvert x\rvert^p}{\tau},~&\rvert x\rvert\leqslant \tau^{1/p},
			\\0, &\rvert x\rvert> \tau^{1/p}.
		\end{cases} \label{firstorder}
		\\&\psi''_{\tau,p}(x)=\begin{cases}
			-\frac{p\rvert x\rvert^{p-1}}{\tau},~&\rvert x\rvert\leqslant \tau^{1/p},
			\\0,&\rvert x\rvert> \tau^{1/p}.
		\end{cases}
	\end{alignat}
	It can be find that its first order derivative function is non-negative and its second order derivative function is non-positive, thus $\psi_{\tau,p}(x)$ is concave and monotonically non-decreasing on $[0,+\infty )$.
	\\\textbf{3.} For $x_1\geqslant0$ and $x_2\geqslant0$, concavity implies
	\begin{equation*}
		\psi_{\tau,p}(x_1)=\psi_{\tau,p}(\frac{x_1}{x_1+x_2}(x_1+x_2)+\frac{x_2}{x_1+x_2}0)\geqslant\frac{x_1}{x_1+x_2}\psi_{\tau,p}(x_1+x_2)+\frac{x_2}{x_1+x_2}\psi_{\tau,p}(0),
	\end{equation*}
	and
	\begin{equation*}
		\psi_{\tau,p}(x_2)=\psi_{\tau,p}(\frac{x_2}{x_1+x_2}(x_1+x_2)+\frac{x_1}{x_1+x_2}0)\geqslant\frac{x_2}{x_1+x_2}\psi_{\tau,p}(x_1+x_2)+\frac{x_1}{x_1+x_2}\psi_{\tau,p}(0).
	\end{equation*}
	Then,
	\begin{equation*}
		\psi_{\tau,p}(x_1)+\psi_{\tau,p}(x_2)\geqslant \psi_{\tau,p}(x_1+x_2)+\psi_{\tau,p}(0)=\psi_{\tau,p}(x_1+x_2).
	\end{equation*}
	\textbf{4.} The non-negativity and monotonicity of $\psi'_{\tau,p}(x)$ is direct from the formulas presented in \eqref{firstorder}. Next, we verify its Lipschitz bounded. The proof is mainly based on that $\psi''_{\tau,p}(x)\leqslant0$ and monotonically non-decreasing on $(0,+\infty)$, which turns
	that $\psi''_{\tau,p}(x)$ is always bounded. Thus exists constant $L(\ell):=\max\{\psi''_{\tau,p}(x),\psi''_{\tau,p}(y)\}$ for any $x,y\in(0,+\infty)$, we have
	\begin{equation*}
		\rvert \psi'_{\tau,p}(x)-\psi'_{\tau,p}(y)\rvert\leqslant L(\ell)\rvert x-y\rvert.
	\end{equation*}
	\textbf{5.} Consider $\psi_{\tau,p}(x)$ is a function with respect to $\tau$ when $x$ and $p$ are fixed, then its derivative function is computed as follows:
	\begin{equation*}
		\begin{cases}
			\frac{\tau^{1/p-1}}{1+p},~&\tau<\rvert x\rvert^p,
			\\\frac{\rvert x\rvert^{1+p}}{(1+p)\tau^{2}}, & \tau\geqslant\rvert x\rvert^p.
		\end{cases}
	\end{equation*}
	It demonstrates that MPCP function is increasing in $\tau$ since its derivative function is non-negative. Note that as $\tau\to+\infty$,
	\begin{equation*}
		\rvert x\rvert-\frac{\rvert x\rvert^{1+p}}{(1+p)\tau}\to \rvert x\rvert.
	\end{equation*}
	Then the limit results follow easily. This completes the proof.
\end{proof} 
To facilitate the application of the MPCP function in the LRTC problem, the proximal operator for the MPCP function is proposed.
\begin{theorem}[Proximal operator for the MPCP function]\label{tMPCP}
	Consider the MPCP function given in \eqref{scalarMPCP}, $\tau>1$, $1>p>0$. Its proximal operator is denoted by $\mathit{P}_{\rho}:\mathbb{R}\to\mathbb{R}$ and defined as follows:
	\begin{equation}
		\mathit{P}_{\rho}(y)=\arg\min_{x}\left\lbrace\frac{1}{2}(x-y)^{2}+\rho \psi_{\tau,p}(x)\right\rbrace ,\label{smpcp1}
	\end{equation}
	which can be represented by 
	\begin{equation} 
		\mathit{P}_{\rho}(y)=x_{1}\odot\text{\tt  sign}(y) ~~\text{with}~~x_{1}=\begin{cases}
			0, ~&\rvert y\rvert< h_a,
			\\x_{a}, &\rvert y\rvert= h_a,
			\\x_{*}, &\tau^{1/p}> \rvert y\rvert> h_a,
			\\\rvert y\rvert,&\rvert y\rvert\geqslant \tau^{1/p},
		\end{cases}\label{smpcp2} 
	\end{equation}
	where $x_{a}=(\frac{2\rho p}{(1+p)\tau})^{\frac{1}{1-p}}$, $h_{a}=x_{a}-\rho x_{a}^p/\tau+\rho$, $\odot$ is hadamard product. For $\rvert y\rvert> h_a,x_{*}\in(x_{a},\rvert y\rvert)$ solves:
	\begin{equation}
		x-\rho x^p/\tau+\rho=\rvert y\rvert ~~\text{where} ~~x>0. \label{pro1}
	\end{equation}
	When $\rvert y\rvert>h_{a}$, there are two solutions to \eqref{pro1} and $x_{\ast}$ is the larger one which can be computed from the iteration:
	\begin{equation}
		x_{k+1}=\alpha(x_{k})~\text{where}~\alpha(x)=\rvert y\rvert-\rho+\rho x^p/\tau,\label{pro2}
	\end{equation}
	with the initial condition $x_{0}\in[x_{a},\rvert y\rvert]$.
\end{theorem}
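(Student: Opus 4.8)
<br>

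The plan is to establish the proximal operator formula by analyzing the optimization problem \eqref{smpcp1} directly, exploiting the structure of $\psi_{\tau,p}$ as a continuous, concave, non-decreasing function on $[0,+\infty)$. First I would observe that since $\psi_{\tau,p}$ is symmetric and non-decreasing in $|x|$, the minimizer $x$ must have the same sign as $y$ (flipping the sign of $x$ can only decrease $(x-y)^2$ without changing $\psi_{\tau,p}(x)$), which justifies the factorization $P_\rho(y)=x_1\odot\mathrm{sign}(y)$ and reduces the problem to minimizing $g(x):=\tfrac12(x-|y|)^2+\rho\psi_{\tau,p}(x)$ over $x\ge 0$. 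I would then split into the two regions dictated by the piecewise definition of $\psi_{\tau,p}$: on $[0,\tau^{1/p}]$ the objective is $\tfrac12(x-|y|)^2+\rho(x-\tfrac{x^{1+p}}{(1+p)\tau})$, and on $(\tau^{1/p},+\infty)$ it is $\tfrac12(x-|y|)^2+\rho\tfrac{p\tau^{1/p}}{1+p}$, whose unconstrained minimizer over that region is $x=|y|$ when $|y|>\tau^{1/p}$, giving the fourth branch of \eqref{smpcp2}.

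Next I would handle the interior region. Setting $g'(x)=0$ on $(0,\tau^{1/p})$ yields exactly the stationarity equation \eqref{pro1}: $x-\rho x^p/\tau+\rho=|y|$. The key analytic fact is that the left-hand side $\phi(x):=x-\rho x^p/\tau+\rho$ is not monotone: since $0<p<1$, the term $-\rho x^p/\tau$ has derivative $-\rho p x^{p-1}/\tau\to-\infty$ as $x\to 0^+$, so $\phi$ is first decreasing and then increasing, with its unique minimum at the point where $\phi'(x)=1-\rho p x^{p-1}/\tau=0$, i.e. $x=x_a=(\rho p/\tau)^{1/(1-p)}\cdot$(constant); I would verify this matches $x_a=(\tfrac{2\rho p}{(1+p)\tau})^{1/(1-p)}$ — here a careful check is needed because the $2$ and the $1+p$ must come from the correct derivative computation (this is one place to be cautious: the exponent and constant in $x_a$ should be re-derived from $g''$ rather than $\phi'$, since $x_a$ is where the second derivative of $g$ vanishes / where the two roots of \eqref{pro1} coalesce). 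Because $\phi$ is U-shaped on $(0,\tau^{1/p})$ with minimum value $\phi(x_a)$, the equation $\phi(x)=|y|$ has two positive roots precisely when $|y|>\phi(x_a)=:h_a$, one root in $(0,x_a)$ and one in $(x_a,\tau^{1/p})$ (using also that $\phi(\tau^{1/p})=\tau^{1/p}$ so the larger root stays below $\tau^{1/p}$ iff $|y|<\tau^{1/p}$); when $|y|=h_a$ the two roots merge at $x_a$; when $|y|<h_a$ there is no interior stationary point.

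Then I would compare candidate values to pin down the global minimizer: the candidates are $x=0$ (boundary), the larger interior root $x_*$, and $x=|y|$ in the flat region. Using concavity of $\psi_{\tau,p}$, the function $g$ is a sum of a strictly convex quadratic and a concave term, so on $[0,\tau^{1/p}]$ it has at most two local minima — at the endpoints or via the roots of \eqref{pro1} — and a standard value comparison shows that the larger root $x_*$ beats the smaller root (the smaller root is a local max of $g$ restricted to the relevant subinterval) and that $x_*$ beats $x=0$ exactly in the regime $\tau^{1/p}>|y|>h_a$, while $x=0$ wins for $|y|<h_a$; continuity at $|y|=h_a$ forces the value $x_a$. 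This yields the four branches of \eqref{smpcp2}. Finally, for the iteration \eqref{pro2} I would note that $\alpha(x)=|y|-\rho+\rho x^p/\tau=\phi(x)-x+|y|$ wait — more directly, $x_*$ is the larger fixed point of $\alpha$, and on $[x_a,|y|]$ one has $\alpha'(x)=\rho p x^{p-1}/\tau\le \rho p x_a^{p-1}/\tau$; I would show this quantity is $<1$ on $(x_a,\tau^{1/p})$ (it equals $1$ at $x_a$ by the definition of $x_a$ up to the constant factor, and $\alpha'$ is decreasing), so $\alpha$ is a contraction on $[x_a,|y|]$ and the Banach fixed-point theorem gives convergence $x_k\to x_*$ from any starting point in that interval.

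The main obstacle I anticipate is the value-comparison bookkeeping in the interior region — precisely establishing that the larger root $x_*$ (rather than the smaller root or the endpoint $x=0$) is the global minimizer, and identifying the exact threshold $h_a$ at which $x=0$ takes over. This requires either a monotonicity argument on $g$ across the two roots or an explicit evaluation of $g(0)-g(x_*)$ and showing its sign changes at $|y|=h_a$; the algebra is delicate because $h_a=x_a-\rho x_a^p/\tau+\rho$ is defined via the stationarity equation, not via equality of objective values, so one must reconcile "two roots merge" with "the zero branch and the nonzero branch have equal objective" — and verifying these coincide (or explaining why the paper's $h_a$ is the correct crossover) is the crux of the argument.
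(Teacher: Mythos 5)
Your overall skeleton matches the paper's proof (reduce to $x,y>0$ by symmetry, study the stationarity equation $m(x):=x+\rho-\rho x^{p}/\tau=y$, compare candidate values, Banach fixed point for the iteration), but there is a genuine error at exactly the point you flag as the crux, and you do not supply the computation that resolves it. You identify $x_{a}=(\tfrac{2\rho p}{(1+p)\tau})^{1/(1-p)}$ with the point where the two roots of \eqref{pro1} coalesce, i.e.\ where $m'(x)=1-\rho p x^{p-1}/\tau=0$. That coalescence point is actually $x_{b}=(\rho p/\tau)^{1/(1-p)}$, which is strictly smaller than $x_{a}$, and the corresponding threshold $h_{b}=m(x_{b})$ is strictly smaller than $h_{a}=m(x_{a})$. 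Consequently your claims that ``the equation has two positive roots precisely when $|y|>h_{a}$'' and ``when $|y|<h_{a}$ there is no interior stationary point'' are false: for $h_{b}\leqslant|y|<h_{a}$ the equation already has two roots, but neither beats $x=0$. The threshold $h_{a}$ is not where stationary points appear; it is where the objective value at the larger root crosses $g(0)=y^{2}/2$, and the ``two thresholds'' you worry about reconciling do not in fact coincide.

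The missing ingredient is the explicit value comparison. Substituting $y=x_{*}+\rho-\rho x_{*}^{p}/\tau$ into $g(x_{*})=\tfrac12(x_{*}-y)^{2}+\rho\bigl(x_{*}-\tfrac{x_{*}^{1+p}}{(1+p)\tau}\bigr)$ yields
\[
g(x_{*})=g(0)+\tfrac12\,x_{*}^{1+p}\bigl(x_{a}^{1-p}-x_{*}^{1-p}\bigr),
\]
which is where the constants $2$ and $1+p$ in $x_{a}$ come from, and which shows $g(x_{*})\leqslant g(0)$ if and only if $x_{*}\geqslant x_{a}$; since $m$ is strictly increasing on $(x_{b},+\infty)$ and $x_{a}>x_{b}$, this is equivalent to $y\geqslant h_{a}$, and it simultaneously shows that only the larger root can be the global minimizer. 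Without this identity (or an equivalent argument) the case split at $h_{a}$ in \eqref{smpcp2} is unproved. A related minor slip: $\alpha'(x)=\rho p x^{p-1}/\tau$ equals $1$ at $x_{b}$, not ``at $x_{a}$ up to a constant factor''; the contraction claim on $[x_{a},|y|]$ survives precisely because $x_{a}>x_{b}$, so $\alpha'(x_{a})<1$ strictly.
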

\begin{proof}
	According to the definition of $\psi_{\tau,p}(x)$, when $ \rvert x\rvert\geqslant \tau^{1/p}$, $\mathit{P}_{\rho}(y) = y$, and $\rvert y \rvert=0$, $\mathit{P}_{\rho}(y) = 0$. Next, we consider the case $ 0<\rvert x\rvert< \tau^{1/p}$. Since $x$ and $y$ have the same sign and $\psi_{\tau,p}(x)$ is a symmetric function, it is sufficient to consider only the case where $y>0$ and $x>0$. Let $g(x)=\frac{1}{2}(x-y)^{2}+\rho( x-\frac{ x^{1+p}}{(1+p)\tau})$. Noting that $g(0)=\frac{y^2}{2}$. As $g(x)$ is differentiable for $\tau^{1/p}\geqslant x>0$, re-arranging $g'(x)=0$ gives:
	\begin{equation}
		m(x):=x+\rho-\rho\frac{x^p}{\tau}=y.\label{mx}
	\end{equation}
	
	Now, note that $m(x)$ has a minimum where $m'(x)=0$, i.e., where $1-\frac{\rho p}{\tau}x^{p-1}=0$ giving $x=x_{b}=(\frac{\rho p}{\tau})^\frac{1}{1-p}$. Further, $m''(x)=\frac{(1-p)p\rho }{\tau}x^{p-2}>0$ for all $\tau^{1/p}\geqslant x>0$. So $m$ has a global minimum of value $h_{b}=x_{b}+\rho-\rho\frac{x_{b}^p}{\tau}$. Thus \eqref{mx} no solution unless $y\geqslant h_{b}$, see Fig. \ref{FUCT3}. Further, $m'(x)<0$ for $x<x_{b}$ ($m$ is strictly decreasing for $x<x_{b}$), while $m'(x)>0$ for $x>x_{b}$ ($m$ is strictly increasing for $x>x_{b}$). So \eqref{mx} has at least one solution for $y\geqslant h_{b}$. Once again, for the illustration see Fig. \ref{FUCT3}.
	\begin{figure}[!tbh]
		\centering
		\includegraphics[width=0.6\linewidth]{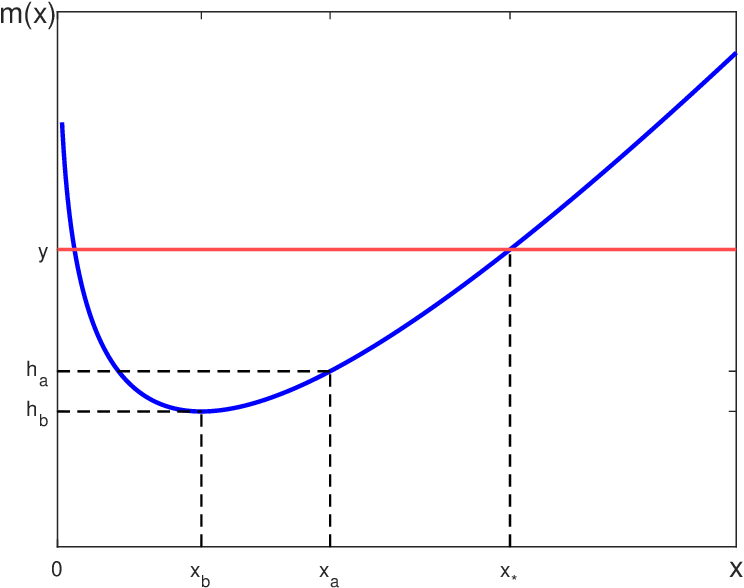}
		\caption {Plot for $m(x)$ for $x>0$. }
		\label{FUCT3}
	\end{figure}
	Further, suppose $x_{\ast}$ is a solution to \eqref{mx}. Then the corresponding value of $g$ is:
	\begin{align*}
		g(x_{\ast})&=\frac{1}{2}y^2-yx_{\ast}+\frac{1}{2}x_{\ast}^2+\rho( x_{\ast}-\frac{ x_{\ast}^{1+p}}{(1+p)\tau})
		\\&=g(0)+\frac{1}{2}x_{\ast}^{1+p}(x_{\ast}^{1-p}+2(\rho-y)x_{\ast}^{-p}-\frac{2\rho}{(1+p)\tau})
		\\&=g(0)+\frac{1}{2}x_{\ast}^{1+p}(\frac{2\rho p}{(1+p)\tau}-x_{\ast}^{1-p})
		\\&=g(0)+\frac{1}{2}x_{\ast}^{1+p}(x_{a}^{1-p}-x_{\ast}^{1-p}),
	\end{align*} 
	which implies that $g(x_{\ast})\leqslant g(0)$ if $x_{\ast}\geqslant x_{a}$. So for a non-zero solution we are only interested in $x_{\ast}>x_{a}$, and $x_{a}>x_{b}$, it can be seen in Fig. \ref{FUCT3}, this is the larger solution to \eqref{mx}. 
	
	Lastly, as $m(x)$ is strictly increasing for $x>x_{b}$ we have that $x_{\ast}\geqslant x_{a}$ is equivalent to $y\geqslant h_{a}$. This can also be seen in Fig. \ref{FUCT3}. We thus conclude that the global minimizer of $g(x)$ is as stated in the theorem.
	
	The iterations using \eqref{pro2} only need to be applied when $\tau^{1/p}>\rvert y\rvert> h_{a}$. For the case $\rvert y\rvert= h_{a}$ we already have that $x_{\ast}=x_{a}$. As a result, we take $\tau^{1/p}>\rvert y\rvert> h_{a}$ and firstly show that:
	\begin{equation}
		\alpha([x_{a},\rvert y\rvert])\subset[x_{a},\rvert y\rvert].\label{prox3}
	\end{equation}
	
	Let $x\in[x_{a},\rvert y\rvert]$. Then as $x\geqslant x_{a}>0$, by definition of $\alpha$ we have $\alpha(x)<\rvert y\rvert$, and by definition of $h_{a}$ we also have $h_a\geqslant x_a+\rho-\rho x^{p}/\tau$. This last inequality and $\rvert y\rvert>h_a$ imply $\rvert y\rvert> x_a+\rho-\rho x^{p}/\tau$, which by re-arrangement and definition of $\alpha$ gives $\alpha(x)>x_a$. Therefore, $x_a<\alpha(x)<\rvert y\rvert$, which proves \eqref{prox3}.
	
	Next, we show that $\alpha$ is a contraction mapping on $[x_{a},\rvert y\rvert]$. By differentiating we have $\alpha'(x)=p\rho x^{p-1}/\tau$, and obviously $\alpha'(x)>0$ as $x\geqslant x_a>0$ i.e., $\alpha'$ is positive on $[x_{a},\rvert y\rvert]$. Then by re-arranging $\alpha'$ it can easily be deduced that: $\alpha'(x)<1$ iff $x>x_b$. Since $x\geqslant x_a>x_b$ we have that $\alpha'(x)<1$. Therefore, $\alpha$ is a contraction on $[x_{a},\rvert y\rvert]$.
	
	Lastly, as $\mathbb{R}$ is complete and $[x_{a},\rvert y\rvert]\subset\mathbb{R}$ is closed implies $[x_{a},\rvert y\rvert]$ is also complete. Then by the standard Banach fixed point theorem: $\alpha$ admits one and only one fixed point $x_{\dagger}\in[x_{a},\rvert y\rvert]$, i.e., $x_{\dagger}=\rvert y\rvert-\rho+\rho x_{\dagger}^{p}/\tau$. Since $x_{\star}\in(x_{a},\rvert y\rvert)\subset[x_{a},\rvert y\rvert]$ satisfies the fixed point equation we must have $x_{\dagger}=x_{\star}$.
\end{proof}

For the tensor case, the tensor $p$-th order $\tau$ norm is proposed, and its properties are analyzed.

\begin{definition}[Tensor $p$-th order $\tau$ norm]
	The tensor $p$-th order $\tau$ norm of $\mathcal{B}\in\mathbb{R}^{\mathit{J}_{1}\times\mathit{J}_{2}\times\mathit{J}_{3}}$, denoted by $\|\mathcal{B}\|_{\tau p}$, is defined as follows:
	\begin{equation}
		\|\mathcal{B}\|_{\tau p}=
		\frac{1}{\mathit{J}_{3}}\sum_{j=1}^{\mathit{J}_{3}}\|\bar{\mathcal{B}}^{(j)}\|_{\tau p}=
		\frac{1}{\mathit{J}_{3}}\sum_{j=1}^{\mathit{J}_{3}}\sum_{i=1}^{R}\psi_{\tau,p}(\sigma_{i}(\bar{\mathcal{B}}^{(j)})),\label{tptn}
	\end{equation}
	where $\tau>1, 1> p>0$; $\bar{\mathcal{B}}^{(j)}$ is the $j$-th frontal slice of $\bar{\mathcal{B}}$, with $\bar{\mathcal{B}}=\text{\tt fft}(\mathcal{B},[~],3)$; $R=\min(\mathit{J}_{1},\mathit{J}_{2})$; $\sigma_{i}(\bar{\mathcal{B}}^{(j)})$ is the $i$-th singular value of $\bar{\mathcal{B}}$.
\end{definition}
Unlike the tensor nuclear norm, the tensor $p$-th order $\tau$ norm is not a true norm, as it does not satisfy homogeneity. Therefore, the term `` norm " is used merely as a naming convention. Below, some important properties satisfied by the tensor $p$-th order $\tau$ norm are presented.
\begin{theorem}\label{PROPER}
	The tensor $p$-th order $\tau$ norm defined in \eqref{tptn} satisfies the following properties:
	\\\textbf{1. Non-negativity}: The tensor $p$-th order $\tau$ norm is non-negative, i.e., $\|\mathcal{B}\|_{\tau p}$ $\geqslant0$. The equality holds if and only if $\mathcal{B}$ is the null tensor. 
	\\\textbf{2. Unitary invariance}: The tensor $p$-th order $\tau$ norm is unitary invariant, i.e., $\|\mathcal{U}\ast\mathcal{B}\ast\mathcal{V}\|_{\tau p}=\|\mathcal{B}\|_{\tau p}$, for unitary tensor $\mathcal{U}$ and $\mathcal{V}$.
	\\\textbf{3. Triangle inequality}: Suppose that $\mathcal{C}\in\mathbb{R}^{\mathit{J}_{1}\times\mathit{J}_{2}\times\mathit{J}_{3}}$ and $\mathcal{D}\in\mathbb{R}^{\mathit{J}_{1}\times\mathit{J}_{2}\times\mathit{J}_{3}}$ are two arbitrary tensors. Then, the following propertie hold:
	$\|\mathcal{C}-\mathcal{D}\|_{\tau p}\geqslant\|\mathcal{C}\|_{\tau p}-\|\mathcal{D}\|_{\tau p}$.
\end{theorem}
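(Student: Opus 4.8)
The plan is to carry each of the three statements into the Fourier domain, where by definition $\|\mathcal{B}\|_{\tau p}=\frac{1}{\mathit{J}_{3}}\sum_{j=1}^{\mathit{J}_{3}}\sum_{i=1}^{R}\psi_{\tau,p}(\sigma_{i}(\bar{\mathcal{B}}^{(j)}))$, and then to combine the scalar properties of $\psi_{\tau,p}$ from Theorem \ref{propert} with a few classical matrix facts applied frontal slice by frontal slice. For Property 1, $\psi_{\tau,p}\geqslant 0$ on $[0,+\infty)$ (it is non-decreasing with $\psi_{\tau,p}(0)=0$ by Theorem \ref{propert}), so $\|\mathcal{B}\|_{\tau p}$ is a finite sum of non-negative terms; moreover $\psi_{\tau,p}(x)=x\bigl(1-\tfrac{x^{p}}{(1+p)\tau}\bigr)>0$ for $0<x\leqslant\tau^{1/p}$ and $\psi_{\tau,p}(x)=\tfrac{p\,\tau^{1/p}}{1+p}>0$ for $x>\tau^{1/p}$, so the sum vanishes iff every $\sigma_{i}(\bar{\mathcal{B}}^{(j)})=0$, i.e. $\bar{\mathcal{B}}=0$, which by invertibility of the DFT along the third mode means $\mathcal{B}$ is the null tensor.

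For Property 2 I would use that the $t$-product becomes slice-wise matrix multiplication after the DFT, so $\overline{\mathcal{U}\ast\mathcal{B}\ast\mathcal{V}}^{(j)}=\bar{\mathcal{U}}^{(j)}\bar{\mathcal{B}}^{(j)}\bar{\mathcal{V}}^{(j)}$, while $\mathcal{U}\ast\mathcal{U}^{H}=\mathcal{I}$ and $\mathcal{V}\ast\mathcal{V}^{H}=\mathcal{I}$ force each $\bar{\mathcal{U}}^{(j)}$ and each $\bar{\mathcal{V}}^{(j)}$ to be a unitary matrix. Since left and right multiplication by unitary matrices preserves singular values, $\sigma_{i}(\bar{\mathcal{U}}^{(j)}\bar{\mathcal{B}}^{(j)}\bar{\mathcal{V}}^{(j)})=\sigma_{i}(\bar{\mathcal{B}}^{(j)})$ for all $i,j$, and summing over $i,j$ gives $\|\mathcal{U}\ast\mathcal{B}\ast\mathcal{V}\|_{\tau p}=\|\mathcal{B}\|_{\tau p}$.

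Property 3 is the substantial one, and it suffices to establish subadditivity $\|\mathcal{X}+\mathcal{Y}\|_{\tau p}\leqslant\|\mathcal{X}\|_{\tau p}+\|\mathcal{Y}\|_{\tau p}$, since then $\mathcal{X}=\mathcal{C}-\mathcal{D}$, $\mathcal{Y}=\mathcal{D}$ yields $\|\mathcal{C}\|_{\tau p}\leqslant\|\mathcal{C}-\mathcal{D}\|_{\tau p}+\|\mathcal{D}\|_{\tau p}$. Because $\text{\tt fft}(\cdot,[~],3)$ is linear and the weights $1/\mathit{J}_{3}$ are positive, this reduces, applied to $A=\bar{\mathcal{X}}^{(j)}$ and $B=\bar{\mathcal{Y}}^{(j)}$ and summed over $j$, to the matrix inequality $\sum_{i}\psi_{\tau,p}(\sigma_{i}(A+B))\leqslant\sum_{i}\psi_{\tau,p}(\sigma_{i}(A))+\sum_{i}\psi_{\tau,p}(\sigma_{i}(B))$ for arbitrary equi-dimensional matrices $A,B$ (padding to square size changes neither side, as $\psi_{\tau,p}(0)=0$). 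Writing $\mathrm{tr}\,\psi_{\tau,p}(|M|)=\sum_{i}\psi_{\tau,p}(\sigma_{i}(M))$ by functional calculus, I would: (i) invoke Thompson's triangle inequality to obtain unitaries $U,V$ with $|A+B|\preceq U|A|U^{H}+V|B|V^{H}$ in the Loewner order; (ii) use Weyl monotonicity of eigenvalues together with $\psi_{\tau,p}$ being non-decreasing (Theorem \ref{propert}), plus unitary invariance of the trace, to reduce to the positive semidefinite statement $\mathrm{tr}\,\psi_{\tau,p}(P+Q)\leqslant\mathrm{tr}\,\psi_{\tau,p}(P)+\mathrm{tr}\,\psi_{\tau,p}(Q)$; (iii) prove the latter from the representation $\psi_{\tau,p}(t)=\int_{0}^{\tau^{1/p}}\min(t,s)\,\tfrac{p\,s^{p-1}}{\tau}\,ds$ (valid since $\psi_{\tau,p}$ is concave with $\psi_{\tau,p}(0)=0$): as the measure is non-negative, it is enough to check that $X\mapsto\mathrm{tr}\min(X,sI)=\mathrm{tr}(X)-\mathrm{tr}(X-sI)_{+}$ is subadditive on the positive semidefinite cone, i.e. $\mathrm{tr}(P+Q-sI)_{+}\geqslant\mathrm{tr}(P-sI)_{+}+\mathrm{tr}(Q-sI)_{+}$, which follows by taking $E,F$ the spectral projections of $P,Q$ onto eigenvalues exceeding $s$, letting $G$ be the projection onto $\mathrm{range}(E)+\mathrm{range}(F)$ (so $\mathrm{rank}\,G\leqslant\mathrm{rank}\,E+\mathrm{rank}\,F$), and plugging $T=G$ into $\mathrm{tr}(Z)_{+}=\max_{0\preceq T\preceq I}\mathrm{tr}(TZ)$.

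I expect step (iii), i.e. the matrix subadditivity of $\sum_{i}\psi_{\tau,p}(\sigma_{i}(\cdot))$, to be the only genuine obstacle. The tempting shortcut through weak majorization of singular values, $\sigma(A+B)\prec_{w}\sigma(A)+\sigma(B)$, fails to give the inequality in the needed direction precisely because $\psi_{\tau,p}$ is \emph{concave}, which forces the detour through Thompson's theorem (or the integral/projection argument above); the scalar subadditivity of Theorem \ref{propert} then enters only through the elementary bound $\min(a+b,s)\leqslant\min(a,s)+\min(b,s)$. If one prefers not to reconstruct this, the matrix inequality $\sum_{i}f(\sigma_{i}(A+B))\leqslant\sum_{i}f(\sigma_{i}(A))+\sum_{i}f(\sigma_{i}(B))$ for concave $f\geqslant 0$ with $f(0)=0$ is classical (Rotfel'd-type) and can be cited directly with $f=\psi_{\tau,p}$, leaving only the Fourier-slice reduction to be carried out.
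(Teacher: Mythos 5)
Your proposal is correct, and for Properties 1 and 2 it follows essentially the same route as the paper: reduce to the Fourier-domain slices, use non-negativity of $\psi_{\tau,p}$ for Property 1, and use that the DFT slices of unitary tensors are unitary matrices (hence preserve singular values) for Property 2. You additionally handle the ``equality iff null tensor'' clause of Property 1, which the paper's proof silently omits; your argument there (strict positivity of $\psi_{\tau,p}$ on $(0,+\infty)$ plus invertibility of the DFT) is the right one. The real divergence is in Property 3. The paper reduces, exactly as you do, to the slice-wise matrix inequality $\sum_{i}\psi_{\tau,p}(\sigma_{i}(A+B))\leqslant\sum_{i}\psi_{\tau,p}(\sigma_{i}(A))+\sum_{i}\psi_{\tau,p}(\sigma_{i}(B))$, but then simply cites Rotfel'd's theorem (its Lemma 3.2, reference [41]) for this fact. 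You instead reconstruct that lemma from scratch: Thompson's triangle inequality $|A+B|\preceq U|A|U^{H}+V|B|V^{H}$, Weyl monotonicity plus monotonicity of $\psi_{\tau,p}$, the integral representation $\psi_{\tau,p}(t)=\int_{0}^{\tau^{1/p}}\min(t,s)\,\tfrac{p\,s^{p-1}}{\tau}\,ds$ (which I checked is valid, since $-\psi''_{\tau,p}(s)=ps^{p-1}/\tau$ on $(0,\tau^{1/p})$ and $\psi'_{\tau,p}$ vanishes beyond), and the projection argument for superadditivity of $\mathrm{tr}(\cdot-sI)_{+}$ on the PSD cone, each step of which is sound. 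Your closing observation that the weak-majorization shortcut fails for concave $f$ is exactly the point that makes the citation (or your detour) necessary. The trade-off is clear: the paper's proof is two lines once the lemma is granted, while yours is self-contained and makes visible where concavity, monotonicity, and $\psi_{\tau,p}(0)=0$ are each actually used; since you also note that one may cite the Rotfel'd-type inequality directly, the two proofs coincide in their essential structure.
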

Before we proof Theorem \ref{PROPER}, we first present a lemma.

\begin{lemma}[\cite{41}, Theorem 1] \label{mtri}Suppose that $C$ and $D$ are two same-sized matrices and	that $\psi(x)$ satisfies Theorem \ref{propert} \textbf{(i)}. Then $\| C + D\|_{\tau p} \leqslant \| C\|_{\tau p} + \| D\|_{\tau p} $.
\end{lemma}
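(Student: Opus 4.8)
The plan is to recognize this as a Rotfel'd-type inequality and to reduce it, through an integral representation of $\psi$, to the subadditivity of the elementary truncation functionals $\Phi_s(M):=\sum_i\min(\sigma_i(M),s)$. Throughout I write $\|M\|_{\tau p}=\sum_i\psi(\sigma_i(M))$ for a matrix $M$, where $\psi=\psi_{\tau,p}$ is concave, non-decreasing, with $\psi(0)=0$ and $\lim_{x\to+\infty}\psi(x)/x=0$ (Theorem \ref{propert}). I would first point out that the naive route fails: although Ky Fan's inequality gives the weak majorization $\sigma(C+D)\prec_w\sigma(C)+\sigma(D)$, combining weak majorization with a \emph{concave} $\psi$ reverses the inequality one wants, so the singular-value data alone are not enough and the matrix structure must be used.

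Step 1 (integral representation). Since $\psi$ is concave and non-decreasing on $[0,+\infty)$ with $\psi(0)=0$, its derivative $\psi'$ is non-negative and non-increasing, so $d\mu:=-d\psi'$ is a non-negative measure on $(0,+\infty)$. The sublinearity $\lim_{x\to+\infty}\psi(x)/x=\psi'(+\infty)=0$ rules out any residual linear term, and one gets
\begin{equation*}
\psi(t)=\int_{(0,+\infty)}\min(t,s)\,d\mu(s),\qquad t\ge0.
\end{equation*}
For $\psi_{\tau,p}$ this is explicit, $d\mu(s)=\frac{p\,s^{p-1}}{\tau}\,\mathbf 1_{(0,\tau^{1/p})}(s)\,ds$. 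Summing over singular values and interchanging the finite sum with the integral (Tonelli; all integrands non-negative) yields $\|M\|_{\tau p}=\int_{(0,+\infty)}\Phi_s(M)\,d\mu(s)$, so it suffices to prove $\Phi_s(C+D)\le\Phi_s(C)+\Phi_s(D)$ for each fixed $s>0$.

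Step 2--3 (variational formula and subadditivity). I would establish the $K$-functional identity $\Phi_s(M)=\min_{X}\bigl(s\,\rank(X)+\|M-X\|_{\ast}\bigr)$, where $\|\cdot\|_{\ast}$ is the nuclear norm: by the Eckart--Young--Mirsky theorem the best rank-$r$ approximant is the truncated SVD with error $\sum_{j>r}\sigma_j(M)$, and minimizing $s\,r+\sum_{j>r}\sigma_j(M)$ over $r$ (the increment is $s-\sigma_{r+1}(M)$) returns exactly $\sum_i\min(\sigma_i(M),s)$. Choosing $X,Y$ optimal for $C,D$, the matrix $X+Y$ is feasible for $C+D$, so using $\rank(X+Y)\le\rank(X)+\rank(Y)$ and the triangle inequality for the nuclear norm,
\begin{align*}
\Phi_s(C+D)&\le s\,\rank(X+Y)+\|(C+D)-(X+Y)\|_{\ast}\\
&\le \bigl(s\,\rank(X)+\|C-X\|_{\ast}\bigr)+\bigl(s\,\rank(Y)+\|D-Y\|_{\ast}\bigr)\\
&=\Phi_s(C)+\Phi_s(D).
\end{align*}
Integrating this against $d\mu$ and invoking Step 1 gives $\|C+D\|_{\tau p}\le\|C\|_{\tau p}+\|D\|_{\tau p}$.

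I expect the main obstacle to be Step 1: producing a genuine non-negative representing measure and excluding a linear part (this is precisely where concavity and the condition $\lim_{x\to+\infty}\psi(x)/x=0$ enter), together with the routine Tonelli interchange of the singular-value sum and the integral. Steps 2--3 are then the standard $K$-functional proof of Rotfel'd's inequality; the only point worth flagging is that rank- and nuclear-norm subadditivity are applied to possibly rectangular matrices, which causes no difficulty.
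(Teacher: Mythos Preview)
The paper does not give its own proof of this lemma; it simply cites Rotfel'd \cite{41}, Theorem~1, and uses the result as a black box in the proof of Theorem~\ref{PROPER}. Your argument is a correct and standard proof of Rotfel'd's inequality: the Choquet-type integral representation $\psi(t)=\int_{(0,\infty)}\min(t,s)\,d\mu(s)$ reduces the claim to subadditivity of $\Phi_s(M)=\sum_i\min(\sigma_i(M),s)$, and the $K$-functional identity $\Phi_s(M)=\min_X\bigl(s\,\rank(X)+\|M-X\|_\ast\bigr)$ together with subadditivity of rank and of the nuclear norm finishes it.

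One remark worth making: the lemma as written in the paper only assumes property~\textbf{1} of Theorem~\ref{propert} (continuity, $\psi(0)=0$, sublinearity at infinity), but your proof---and indeed Rotfel'd's theorem itself---requires concavity (property~\textbf{2}) so that $-d\psi'$ is a non-negative measure. Without concavity the inequality can fail. You implicitly invoke concavity in Step~1, which is the right thing to do; this is an imprecision in the paper's hypothesis, not in your argument.
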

Then we proceed to prove Theorem \ref{PROPER}.
\begin{proof}
	\textbf{1.} Since $\|\mathcal{B}\|_{\tau p}$ is the sum of non-negative functions $\psi_{\tau,p}(\sigma_{i}(\bar{\mathcal{B}}^{(j)}))$, $\|\mathcal{B}\|_{\tau p}$ is non-negative, i.e., $\|\mathcal{B}\|_{\tau p}\geqslant0$.
	\\\textbf{2.} Considering the definition of the tensor $p$-th order $\tau$ norm, $\|\mathcal{B}\|_{\tau p}$ can be expressed as follows:
	\begin{equation*}
		\|\mathcal{B}\|_{\tau p}=\frac{1}{\mathit{J}_{3}}\sum_{j=1}^{\mathit{J}_{3}}\text{\tt Tr}(\psi_{\tau,p}(\sqrt{\bar{\mathcal{B}}^{(j)T}\bar{\mathcal{B}}^{(j)}})),
	\end{equation*}
	where ${\tt Tr}(\cdot)$ denotes the trace operator, $\psi_{\tau,p}(\cdot)$ means to take the MPCP function of each element. Next, consider $\|\mathcal{U}\ast \mathcal{B}\ast\mathcal{V}\|_{\tau p}$, where $\mathcal{U}$ and $\mathcal{V}$ are unitary tensors:
	\begin{equation*}
		\|\mathcal{U}\ast\mathcal{B}\ast\mathcal{V}\|_{\tau p}
		=\frac{1}{\mathit{J}_{3}}\sum_{j=1}^{\mathit{J}_{3}}\text{\tt Tr}(\psi_{\tau,p}(\sqrt{(\bar{\mathcal{U}}^{(j)}\bar{\mathcal{B}}^{(j)}\bar{\mathcal{V}}^{(j)})^{T}\bar{\mathcal{U}}^{(j)}\bar{\mathcal{B}}^{(j)}\bar{\mathcal{V}}^{(j)}})).
	\end{equation*}
	
	Properties of the tensor generated by performing DFT show that $\bar{\mathcal{U}}^{(j)}$ and $\bar{\mathcal{V}}^{(j)}$ are unitary matrix. Then, we get the following formula:
	\begin{equation*}
		\|\mathcal{U}\ast\mathcal{B}\ast\mathcal{V}\|_{\tau p}=\|\mathcal{B}\|_{\tau p}.
	\end{equation*}
	
	This establishes the invariance of the tensor $p$-th order $\tau$ norm to unitary transformations.
	\\\textbf{3.} Denote $\mathcal{A}=\mathcal{C}-\mathcal{D}$. By Lemma \ref{mtri}, we obtain that
	\begin{align*}{}
		\|\mathcal{C}\|_{\tau p}&=\|\mathcal{A}+\mathcal{D}\|_{\tau p}
		\\&=\frac{1}{\mathit{J}_{3}}\sum_{j=1}^{\mathit{J}_{3}}\|\bar{\mathcal{A}}^{(j)}+\bar{\mathcal{D}}^{(j)}\|_{\tau p}
		\\&\leqslant\frac{1}{\mathit{J}_{3}}\sum_{j=1}^{\mathit{J}_{3}}\|\bar{\mathcal{A}}^{(j)}\|_{\tau p}+\frac{1}{\mathit{J}_{3}}\sum_{j=1}^{\mathit{J}_{3}}\|\bar{\mathcal{D}}^{(j)}\|_{\tau p}
		\\&=\|\mathcal{A}\|_{\tau p}+\|\mathcal{D}\|_{\tau p},
	\end{align*}
	which implies that $\|\mathcal{C}-\mathcal{D}\|_{\tau p}\geqslant\|\mathcal{C}\|_{\tau p}-\|\mathcal{D}\|_{\tau p}$.
\end{proof}

Additionally, to facilitate the solution of subsequent algorithms, the proximal operator for the tensor $p$-th order $\tau$ norm is proposed.
\begin{theorem}[Proximal operator for the tensor $p$-th order $\tau$ norm]\label{thTPTN}
	Consider the tensor $p$-th order $\tau$ norm given in \eqref{tptn}. Its proximal operator is denoted by $\mathit{S}_{\rho}:\mathbb{R}^{\mathit{J}_{1}\times\mathit{J}_{2}\times\mathit{J}_{3}}\to\mathbb{R}^{\mathit{J}_{1}\times\mathit{J}_{2}\times\mathit{J}_{3}}$, $0<p<1, \tau>1$, $R=\min\{\mathit{J}_{1},\mathit{J}_{2}\}$ and defined as follows:
	\begin{equation}
		\mathit{S}_{\rho}(\mathcal{Y})=\arg\min_{\mathcal{B}} \frac{1}{2}\|\mathcal{B}-\mathcal{Y}\|_{F}^{2}+\rho\|\mathcal{B}\|_{\tau p}  \label{prox}
	\end{equation}
	is given by
	\begin{equation}
		\mathit{S}_{\rho}(\mathcal{Y})=\mathcal{M}\ast\mathcal{S}_{1}\ast\mathcal{Q}^{H},
		\label{opewtgn}
	\end{equation}where $\mathcal{M}$ and $\mathcal{Q}$ are derived from the t-SVD of $\mathcal{Y}=\mathcal{M}\ast\mathcal{S}_{2}\ast\mathcal{Q}^{H}$. Additionally, the $(k,k)$-th elements of the $j$-th frontal slice of $\bar{\mathcal{S}}_{1}$ and $\bar{\mathcal{S}}_{2}$ are $\sigma_{k}(\bar{\mathcal{B}}^{(j)})$ and $\sigma_{k}(\bar{\mathcal{Y}}^{(j)})$, i.e., $\bar{\mathcal{S}}^{(j)}_{1}{(k,k)}=\sigma_{k}(\bar{\mathcal{B}}^{(j)})$ and $\bar{\mathcal{S}}^{(j)}_{2}{(k,k)}=\sigma_{k}(\bar{\mathcal{Y}}^{(j)})$, respectively. More importantly, $\sigma_{k}(\bar{\mathcal{B}}^{(j)})$ and $\sigma_{k}(\bar{\mathcal{Y}}^{(j)})$ have the following relationship $\sigma_{k}(\bar{\mathcal{B}}^{(j)})=\mathit{P}_{\rho}(\sigma_{k}(\bar{\mathcal{Y}}^{(j)}))$. 
\end{theorem}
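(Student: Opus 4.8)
The plan is to reduce the tensor-valued proximal problem \eqref{prox} to a collection of scalar proximal problems of the form \eqref{smpcp1}, exploiting the fact that both the Frobenius norm and the tensor $p$-th order $\tau$ norm are expressed, via the DFT along the third mode, as sums over frontal slices, and that the $\tau$ norm of each slice depends only on its singular values. First I would apply Parseval's relation for the DFT, namely $\|\mathcal{B}-\mathcal{Y}\|_F^2 = \frac{1}{\mathit{J}_3}\sum_{j=1}^{\mathit{J}_3}\|\bar{\mathcal{B}}^{(j)}-\bar{\mathcal{Y}}^{(j)}\|_F^2$, together with the definition \eqref{tptn}, to rewrite the objective in \eqref{prox} as $\frac{1}{\mathit{J}_3}\sum_{j=1}^{\mathit{J}_3}\bigl(\frac{1}{2}\|\bar{\mathcal{B}}^{(j)}-\bar{\mathcal{Y}}^{(j)}\|_F^2 + \rho\sum_{i=1}^{R}\psi_{\tau,p}(\sigma_i(\bar{\mathcal{B}}^{(j)}))\bigr)$. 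Since the variables $\bar{\mathcal{B}}^{(j)}$ for distinct $j$ are decoupled, the minimization splits into $\mathit{J}_3$ independent matrix problems $\min_{B}\frac{1}{2}\|B-\bar{\mathcal{Y}}^{(j)}\|_F^2 + \rho\sum_i\psi_{\tau,p}(\sigma_i(B))$.

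Next I would invoke the standard von Neumann trace inequality / unitarily invariant penalty argument: because $\psi_{\tau,p}$ is such that $\sum_i\psi_{\tau,p}(\sigma_i(\cdot))$ is a unitarily invariant function of the matrix (this is exactly the content of the unitary invariance established in Theorem \ref{PROPER}, part 2, at the matrix level), the optimal $B$ shares its left and right singular vectors with $\bar{\mathcal{Y}}^{(j)}$, so writing $\bar{\mathcal{Y}}^{(j)} = \bar{\mathcal{M}}^{(j)}\,\diag(\sigma_k(\bar{\mathcal{Y}}^{(j)}))\,\bar{\mathcal{Q}}^{(j)H}$ the problem further reduces, for each $j$ and each index $k$, to the scalar problem $\min_{\sigma\geqslant 0}\frac{1}{2}(\sigma-\sigma_k(\bar{\mathcal{Y}}^{(j)}))^2 + \rho\psi_{\tau,p}(\sigma)$, whose solution is precisely $\mathit{P}_\rho(\sigma_k(\bar{\mathcal{Y}}^{(j)}))$ by Theorem \ref{tMPCP}. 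This yields $\sigma_k(\bar{\mathcal{B}}^{(j)}) = \mathit{P}_\rho(\sigma_k(\bar{\mathcal{Y}}^{(j)}))$ and hence $\bar{\mathcal{B}}^{(j)} = \bar{\mathcal{M}}^{(j)}\bar{\mathcal{S}}_1^{(j)}\bar{\mathcal{Q}}^{(j)H}$ with $\bar{\mathcal{S}}_1^{(j)}(k,k) = \mathit{P}_\rho(\sigma_k(\bar{\mathcal{Y}}^{(j)}))$. Finally, applying the inverse DFT along the third mode and recognizing the slice-wise products as a t-product (using the characterization $\bar{\mathcal{C}} = \text{\tt bdfold}(\text{\tt bdiag}(\bar{\mathcal{A}})\text{\tt bdiag}(\bar{\mathcal{B}}))$ for $\mathcal{C}=\mathcal{A}\ast\mathcal{B}$) recovers the claimed closed form $\mathit{S}_\rho(\mathcal{Y}) = \mathcal{M}\ast\mathcal{S}_1\ast\mathcal{Q}^H$.

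The main obstacle, and the step that deserves the most care, is the second one: justifying that the minimizer of each decoupled matrix subproblem inherits the singular vectors of $\bar{\mathcal{Y}}^{(j)}$. The penalty $\psi_{\tau,p}$ is nonsmooth only mildly (it is $C^1$ by Theorem \ref{propert}, part 1) but it is \emph{non-convex}, so one cannot simply cite a convex proximal-operator theorem; instead the argument rests on the fact that $\sum_i\psi_{\tau,p}(\sigma_i(B))$ is a unitarily invariant function together with a Hoffman–Wielandt / von Neumann-type inequality showing $\|B-\bar{\mathcal{Y}}^{(j)}\|_F^2 \geqslant \sum_k(\sigma_k(B)-\sigma_k(\bar{\mathcal{Y}}^{(j)}))^2$, with equality exactly when the singular vectors are aligned. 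One must also confirm that the scalar problem is solved over $\sigma\geqslant 0$ (singular values are nonnegative) consistently with how $\mathit{P}_\rho$ is defined in Theorem \ref{tMPCP}, and that $\psi_{\tau,p}$ evaluated on a negative argument via its symmetric extension does not create spurious minimizers — but since $\psi_{\tau,p}$ is even and nondecreasing on $[0,\infty)$ and $\sigma_k(\bar{\mathcal{Y}}^{(j)})\geqslant 0$, the minimizer is automatically nonnegative, so this is routine. A minor bookkeeping point is the factor $1/\mathit{J}_3$, which is common to all terms and therefore does not affect the argmin.
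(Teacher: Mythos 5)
Your proposal is correct and follows essentially the same route as the paper's own proof: a slice-wise decoupling in the Fourier domain via Parseval, followed by von Neumann's trace inequality to align the singular vectors of the minimizer with those of $\bar{\mathcal{Y}}^{(j)}$, and finally the scalar proximal operator of Theorem \ref{tMPCP} applied to each singular value. Your explicit remark that non-convexity of $\psi_{\tau,p}$ forces the von Neumann-type argument (rather than a citation of a convex prox theorem) is exactly the role the trace inequality plays in the paper's proof, so there is nothing substantively different to compare.
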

\begin{proof}
	Let $\mathcal{Y}=\mathcal{M}\ast\mathcal{S}_{2}\ast\mathcal{Q}^{H}$ and $\mathcal{B}=\mathcal{U}\ast\mathcal{S}_{1}\ast\mathcal{V}^{H}$ be the t-SVD of $\mathcal{Y}$ and $\mathcal{B}$, respectively. Consider
	\begin{equation}\label{proflc}
		\begin{aligned}
			\mathit{S}_{\rho}(\mathcal{Y})&=\arg\min_{\mathcal{B}}\frac{1}{2}\|\mathcal{B}-\mathcal{Y}\|_{F}^{2}+\rho\|\mathcal{B}\|_{\tau p}
			\\&=\arg\min_{\mathcal{B}}\frac{1}{2}\|\mathcal{U}\ast\mathcal{S}_{1}\ast\mathcal{V}^{H}-\mathcal{M}\ast\mathcal{S}_{2}\ast\mathcal{Q}^{H}\|_{F}^{2}+\rho\|\mathcal{B}\|_{\tau p}
			\\&=\arg\min_{\bar{\mathcal{B}}}\frac{1}{\mathit{J}_{3}}\sum_{j=1}^{\mathit{J}_{3}}(\frac{1}{2}\|\bar{\mathcal{U}}^{(j)}\ast\bar{\mathcal{S}_{1}}^{(j)}\ast\bar{\mathcal{V}}^{(j)H}-\bar{\mathcal{M}}^{(j)}\ast\bar{\mathcal{S}_{2}}^{(j)}\ast\bar{\mathcal{Q}}^{(j)H}\|_{F}^{2}+\rho\|\bar{\mathcal{B}}^{(j)}\|_{\tau p}).
		\end{aligned}	
	\end{equation}
	It can be found that \eqref{proflc} is separable and can be divided into $\mathit{J}_{3}$ sub-problems. For the $j$-th sub-problem:
	\begin{align*}
		&\arg\min_{\bar{\mathcal{B}}^{(j)}}\frac{1}{2}\|\bar{\mathcal{U}}^{(j)}\ast\bar{\mathcal{S}_{1}}^{(j)}\ast\bar{\mathcal{V}}^{(j)H}-\bar{\mathcal{M}}^{(j)}\ast\bar{\mathcal{S}_{2}}^{(j)}\ast\bar{\mathcal{Q}}^{(j)H}\|_{F}^{2}+\rho\|\bar{\mathcal{B}}^{(j)}\|_{\tau p}\nonumber
		\\&=\arg\min_{\bar{\mathcal{B}}^{(j)}}\frac{1}{2}\text{\tt Tr}(\bar{\mathcal{S}_{1}}^{(j)}\bar{\mathcal{S}_{1}}^{(j)H})+\frac{1}{2}\text{\tt Tr}(\bar{\mathcal{S}_{2}}^{(j)}\bar{\mathcal{S}_{2}}^{(j)H})-\text{\tt Tr}(\bar{\mathcal{B}}^{(j)H}\bar{\mathcal{Y}}^{(j)})+\rho\|\bar{\mathcal{B}}^{(j)}\|_{\tau p}.
	\end{align*}
	Invoking von Neumann's trace inequality \cite{mirsky1975trace}, we can write
	\begin{equation}\label{proftp}
		\begin{aligned}
			&\arg\min_{\bar{\mathcal{B}}^{(j)}}\frac{1}{2}\|\bar{\mathcal{U}}^{(j)}\ast\bar{\mathcal{S}_{1}}^{(j)}\ast\bar{\mathcal{V}}^{(j)H}-\bar{\mathcal{M}}^{(j)}\ast\bar{\mathcal{S}_{2}}^{(j)}\ast\bar{\mathcal{Q}}^{(j)H}\|_{F}^{2}+\rho\|\bar{\mathcal{B}}^{(j)}\|_{\tau p}
			\\&\geqslant\arg\min_{\bar{\mathcal{B}}^{(j)}}\frac{1}{2}{\tt Tr}(\bar{\mathcal{S}_{1}}^{(j)}\bar{\mathcal{S}_{1}}^{(j)H})+\frac{1}{2}{\tt Tr}(\bar{\mathcal{S}_{2}}^{(j)}\bar{\mathcal{S}_{2}}^{(j)H})-{\tt Tr}(\bar{\mathcal{S}_{2}}^{(j)}\bar{\mathcal{S}_{1}}^{(j)H})+\rho\|\bar{\mathcal{B}}^{(j)}\|_{\tau p}
			\\&=\sum_{k=1}^{R}\arg\min_{\sigma_{k}(\bar{\mathcal{B}}^{(j)})}\frac{1}{2}(\sigma_{k}(\bar{\mathcal{B}}^{(j)})-\sigma_{k}(\bar{\mathcal{Y}}^{(j)}))^{2}+\rho\psi_{\tau,p}(\sigma_{k}(\bar{\mathcal{B}}^{(j)})).
		\end{aligned}	
	\end{equation}
	The equality holds when $\bar{\mathcal{U}}^{(j)}=\bar{\mathcal{M}}^{(j)}$ and $\bar{\mathcal{V}}^{(j)}=\bar{\mathcal{Q}}^{(j)}$. By Theorem \ref{tMPCP}, the optimal solution of \eqref{proftp} can be obtained through $\sigma_{k}(\bar{\mathcal{B}}^{(j)})=\mathit{P}_{\rho}(\sigma_{k}(\bar{\mathcal{Y}}^{(j)}))$. 
\end{proof}
\begin{remark}
	The proximal operator for the tensor $p$-th order $\tau$ norm can also degenerate into a matrix form. When $\mathit{J}_3 = 1$, the proximal operator degenerates to the matrix case, where it is only necessary to compute the singular values of the matrix. This indicates that Theorem \ref{thTPTN} applies not only to the third-order tensor case but also to the second-order matrix case.
\end{remark}

\section{MPCP-based models and solving algorithm}
The LRTC problem aims at estimating the missing elements from an incomplete observed tensor. Considering an $N$th-order observed tensor $\mathcal{O}\in\mathbb{R}^{\mathit{J}_{1}\times\mathit{J}_{2}\times\cdots\times\mathit{J}_{N}}$, the LRTC problem is
\begin{equation*}
	\min_{\mathcal{B}}\,{\tt rank}(\mathcal{B})~~s.t.~~\mathcal{P}_{\Omega}(\mathcal{B})=\mathcal{P}_{\Omega}(\mathcal{O}),
\end{equation*}
where $\mathcal{B}$ is the underlying tensor; the operator $\mathcal{P}_{\Omega}(\mathcal{B})$ function as a projection operator, retaining only the entries of $\mathcal{B}$ that belong to $\Omega$ while setting all others to zero. 

First, we use the $N$-tubal rank to represent the tensor rank function, then the above equation is transformed into the following form:
\begin{equation*}
	\min_{\mathcal{B}}\sum_{q=1}^{N(N-1)/2}\beta_{q}  {\tt rank}_{t}(\mathcal{B}_{<q>})~~s.t.~~ \mathcal{P}_{\Omega}(\mathcal{B})=\mathcal{P}_{\Omega}(\mathcal{O}).
\end{equation*}

Next, for each rank in $N$-tubal rank, we apply the MPCP function as the non-convex relaxation, then the proposed MPCP-based LRTC model is formulated as follow
\begin{equation}
	\min_{\mathcal{B}}\sum_{q=1}^{N(N-1)/2}\beta_{q}  \|\mathcal{B}_{<q>}\|_{\tau p}~~s.t.~~ \mathcal{P}_{\Omega}(\mathcal{B})=\mathcal{P}_{\Omega}(\mathcal{O}).\label{LRTCMPCP}
\end{equation}

Problem \eqref{LRTCMPCP} is difficult to optimize due to the application of multiple tensor $p$-th order $\tau$ norms to a tensor $\mathcal{B}$. Therefore, we introduce $N(N-1)/2$ auxiliary tensors $\{\mathcal{M}_{q}\}_{q=1}^{N(N-1)/2}$ and the indicator
function $\Phi(\mathcal{B})$ to simplify this problem:
\begin{equation}
	\min_{\mathcal{B},\{\mathcal{M}_{q}\}_{q=1}^{N(N-1)/2}}\sum_{q=1}^{N(N-1)/2}\beta_{q} \|\mathcal{M}_{q<q>}\|_{\tau p}+\Phi(\mathcal{B})~~s.t.~~ \{\mathcal{M}_{q}=\mathcal{B}\}_{q=1}^{N(N-1)/2},\label{LRTCMPCP1}
\end{equation}
where 
\begin{equation*}
	\Phi(\mathcal{B}):=\begin{cases}
		0,~~&\mathcal{P}_{\Omega}(\mathcal{B})=\mathcal{P}_{\Omega}(\mathcal{O}),
		\\+\infty,&\text{otherwise}.
	\end{cases}
\end{equation*}
Next, we use the ADMM \cite{683201156} algorithm to solve problem \eqref{LRTCMPCP1}, and its augmented Lagrangian function is as follows:
\begin{equation}
	L(\mathcal{B},\mathcal{M}_{q},\mathcal{T})
	=\sum_{q=1}^{N(N-1)/2}\beta_{q} \|\mathcal{M}_{q<q>}\|_{\tau p}+<\mathcal{B}-\mathcal{M}_{q},\mathcal{T}_{q}>+\frac{\rho_{q}}{2}\|\mathcal{B}-\mathcal{M}_{q}\|_{F}^{2}+\Phi(\mathcal{B}), \label{MPCP2}
\end{equation}
where $\{\mathcal{T}_{q}\}_{q=1}^{N(N-1)/2}$ is the set of Lagrange multiplier tensors; $\{\rho_{q}\}_{q=1}^{N(N-1)/2}>0$ are the augmented Lagrangian parameters; and $\beta_{q}\geqslant0$ are weights, with $\sum_{q=1}^{N(N-1)/2}\beta_{q}=1$. Accordingly, we iteratively solved the following set of subproblems within the ADMM framework:
\begin{equation}\label{UPT}
	\begin{cases}
		\mathcal{B}^{k+1}=\arg\min_{\mathcal{B}}L(\mathcal{B},\mathcal{M}_{q}^{k},\mathcal{T}^{k}),
		\\\mathcal{M}_{q}^{k+1}=\arg\min_{\mathcal{M}_{q}}L(\mathcal{B}^{k+1},\mathcal{M}_{q},\mathcal{T}^{k}),
		\\\mathcal{T}_{q}^{k+1}=\mathcal{T}_{q}^{k}+\rho_{q}^{k}(\mathcal{B}^{k+1}-\mathcal{M}_{q}^{k+1}),
	\end{cases} 
\end{equation}
where $k$ represents the iteration number. Below, we provide the detailed solutions for the variables $\mathcal{B}$ and $\mathcal{M}_{q}$.

\textbf{Update $\mathcal{B}$}: The $\mathcal{B}$-subproblem is a quadratic optimization problem as follows:
\begin{align}
	\mathcal{B}^{k+1}&=\arg\min_{\mathcal{B}}\Phi(\mathcal{B})+\sum_{q=1}^{N(N-1)/2}\frac{\rho_{q}^{k}}{2}\|\mathcal{B}-\mathcal{M}_{q}^{k}+\frac{\mathcal{T}_{q}^{k}}{\rho_{q}^{k}}\|_{F}^{2}\label{UPBB}
	\\&=\mathcal{P}_{\Omega^{c}}((\sum_{q}\rho_{q}^{k}\mathcal{M}_{q}^{k}-\mathcal{T}_{q}^{k})/\sum_{q}\rho_{q}^{k})+\mathcal{P}_{\Omega}(\mathcal{O}),\label{UPB}
\end{align}
where $\Omega^{c}$ is the complement of $\Omega$.

\textbf{Update $\mathcal{M}_{q}$}: Fix other variables, and the corresponding optimization is as follows: 
\begin{equation}
	\mathcal{M}_{q}^{k+1}=\arg\min_{\mathcal{M}_{q}}\beta_{q} \|\mathcal{M}_{q<q>}\|_{\tau p}+\frac{\rho_{q}^{k}}{2}\|\mathcal{B}^{k+1}-\mathcal{M}_{q}+\frac{\mathcal{T}_{q}^{k}}{\rho_{q}^{k}}\|_{F}^{2}.\label{UPMM}
\end{equation}

Recalling Theorem \ref{thTPTN}, the solution to the above optimization is given by:
\begin{equation}
	\mathcal{M}_{q}^{k+1}=\mathit{S}_{\rho_{q}^{k}/\beta_{q}}(\mathcal{B}^{k+1}+\frac{\mathcal{T}_{q}^{k}}{\rho_{q}^{k}}).\label{UPM}
\end{equation}

The optimization steps of the MPCP formulation are outlined in Algorithm \ref{ATC}. The primary computational cost per-iteration arises from the update of $\mathcal{M}_{q}$, which requires the computation of t-SVD. The per-iteration complexity is $O(Y(\sum_{q}[\log(y_{q})+\min(\mathit{J}_{q_{1}},\mathit{J}_{q_{2}})]))$, where $Y=\prod_{j=1}^{N}\mathit{J}_{j}$ and $y_{q}=Y/(\mathit{J}_{q_{1}}\mathit{J}_{q_{2}})$. This matches the computational complexity presented in \cite{2020170}.
\begin{algorithm}[!h]
	\caption{MPCP-LRTC}
	\hspace*{0.02in} {\bf Input:}
	An incomplete tensor $\mathcal{O}$, the index set of the known elements $\Omega$, convergence criteria $\epsilon=10^{-4}$, maximum iteration number $K$. \\
	\hspace*{0.02in} {\bf Initialization:}
	$\mathcal{B}^{0}=\mathcal{P}_{\Omega}(\mathcal{O})$, $\mathcal{Y}_{q}^{0}=\mathcal{B}^{0}$, $\rho_{q}^{0}>0$, $\mu=1.05$.
	\begin{algorithmic}
		\While{not converged and $k<K$}
		\State Updating $\mathcal{B}^{k}$ via \eqref{UPB};
		\State Updating $\mathcal{M}_{q}^{k}$ via \eqref{UPM};
		\State Updating the multipliers $\mathcal{T}_{q}^{k}$ via \eqref{UPT};
		\State $\rho_{q}^{k}=\mu\rho_{q}^{k-1}$, $k=k+1$;
		\State Check the convergence conditions $\|\mathcal{B}^{k+1}-\mathcal{B}^{k}\|^{2}_{F}/\|\mathcal{B}^{k}\|^{2}_{F}\leqslant\epsilon$.
		\EndWhile
		\State \Return $\mathcal{B}^{k+1}$.
	\end{algorithmic}
	\hspace*{0.02in} {\bf Output:} 
	Completed tensor $\mathcal{B}$.
	\label{ATC}\end{algorithm}

\section{Convergence analysis}
In this section, to rigorously establish the theoretical convergence analysis of the proposed Algorithm \ref{ATC}, the following lemma is first introduced, which serves as a foundational result for the subsequent proofs and discussions.
\begin{lemma}[Proposition 6.2 of \cite{lewis2005nonsmooth}]\label{le5.1}
	Suppose $F:\mathbb{R}^{\mathit{I}_{1}\times \mathit{I}_{2}}\to\mathbb{R}$ is represented as $F(X)=f\circ\sigma(X)$, where $X\in\mathbb{R}^{\mathit{I}_{1}\times \mathit{I}_{2}}$ with SVD : $X=U\text{diag}(\sigma_{1},\dots,\sigma_{R})V^{T}$, $R=\min\{\mathit{I}_{1},\mathit{I}_{2}\}$, and $f$ is differentiable. The gradient of $F(X)$ at $X$ is $	\frac{\partial F(X)}{\partial X}=U\text{diag}(\theta)V^{T}$,
	$\theta=\frac{\partial f(y)}{\partial y}\lvert_{y=\sigma(X)}$.
\end{lemma}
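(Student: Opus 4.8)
The statement is the standard Lewis--Sendov differentiability formula for spectral functions of singular values, so the plan is to reconstruct its proof along perturbation-theoretic lines. Write $\mu := \sigma(X)=(\sigma_1,\dots,\sigma_R)$. First I would check that the candidate gradient $G := U\,\text{diag}(\nabla f(\mu))\,V^{T}$ does not depend on the chosen SVD: since $f\circ\sigma$ is well defined only when $f$ is symmetric, its partials satisfy $\partial_i f(\mu)=\partial_j f(\mu)$ whenever $\sigma_i=\sigma_j$ (and $\partial_i f(\mu)=0$ when $\sigma_i=0$, using sign-invariance, which holds in our setting), so $\text{diag}(\nabla f(\mu))$ commutes with the block-permutation and sign freedom in $(U,V)$ and $G$ is unambiguous. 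By orthogonal invariance of $\|\cdot\|_F$ and of $\sigma(\cdot)$ I would then reduce to $X=\text{diag}(\mu)$, where the claim becomes $\nabla F(X)=\text{diag}(\nabla f(\mu))$.

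Next I would compute the one-sided directional derivative $F'(X;\Delta)=\lim_{t\downarrow 0}\big(F(X+t\Delta)-F(X)\big)/t$ for arbitrary $\Delta$, using first-order perturbation of singular values: a simple positive singular value in position $i$ perturbs as $\sigma_i(X+t\Delta)=\sigma_i+t\,\Delta_{ii}+O(t^2)$, while a positive value of multiplicity on an index block $I$ gives $|I|$ perturbed singular values behaving like $\sigma+t\,\lambda_k\!\big(\tfrac12(\Delta_{II}+\Delta_{II}^{T})\big)+o(t)$, with $\Delta_{II}$ the corresponding principal submatrix; zero singular values contribute only lower-order pieces that drop out below. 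Feeding this into $f$ and using differentiability of $f$ at $\mu$ gives $F(X+t\Delta)=f(\mu)+t\,\langle\nabla f(\mu),\delta\rangle+o(t)$, where $\delta$ collects the first-order coefficients just listed and $\|\delta\|\le\|\Delta\|_F$.

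The crucial step is to collapse $\langle\nabla f(\mu),\delta\rangle$ via symmetry: on a multiplicity block $I$ the partials equal a common value $c_I$, so that block contributes $c_I\sum_{i\in I}\lambda_i\!\big(\tfrac12(\Delta_{II}+\Delta_{II}^{T})\big)=c_I\,\text{Tr}(\Delta_{II})=\sum_{i\in I}\partial_i f(\mu)\,\Delta_{ii}$; simple blocks contribute $\partial_i f(\mu)\,\Delta_{ii}$ directly, and zero-singular-value indices contribute nothing. Hence $F'(X;\Delta)=\sum_i\partial_i f(\mu)\,\Delta_{ii}=\langle\text{diag}(\nabla f(\mu)),\Delta\rangle$, which is \emph{linear} in $\Delta$. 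To upgrade this G\^{a}teaux derivative to a Fr\'{e}chet derivative I would use that $F=f\circ\sigma$ is locally Lipschitz (it inherits $f$'s local Lipschitz constant, since $\|\sigma(X)-\sigma(Y)\|\le\|X-Y\|_F$): the difference quotients $V\mapsto\big(F(X+tV)-F(X)\big)/t$ are equi-Lipschitz on the compact unit sphere, so their pointwise convergence to the linear functional $\langle\text{diag}(\nabla f(\mu)),\cdot\rangle$ is automatically uniform, which is exactly Fr\'{e}chet differentiability; undoing the orthogonal reduction gives $\nabla F(X)=U\,\text{diag}(\nabla f(\mu))\,V^{T}$.

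I expect the main obstacle to be the repeated-singular-value case, where $X\mapsto\sigma(X)$ is itself nonsmooth: one genuinely needs the eigenvalue-perturbation identity for the symmetric part of each diagonal block, combined with the equal-partials consequence of symmetry, to see that only the trace of that block---hence only $\text{diag}(\Delta)$---survives at first order. A cleaner alternative is to pass to the symmetric dilation $\widetilde X=\begin{pmatrix}0&X\\X^{T}&0\end{pmatrix}$, whose eigenvalues are the numbers $\pm\sigma_i(X)$ together with zeros; this reduces the claim to the analogous classical differentiability formula for symmetric functions of eigenvalues of a symmetric matrix, leaving only the bookkeeping that relates an SVD of $X$ to an eigendecomposition of $\widetilde X$.
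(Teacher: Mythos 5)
The paper does not actually prove this lemma: it is imported verbatim, with a citation, from Lewis and Sendov, so there is no in-paper argument to compare yours against. Your reconstruction follows the standard route for that result --- orthogonal reduction to a diagonal matrix, first-order perturbation of singular values (with the symmetric-part eigenvalue formula on multiplicity blocks), collapse of the directional derivative to a trace via the equal-partials consequence of symmetry, and the equi-Lipschitz upgrade from G\^ateaux to Fr\'echet differentiability --- and it is essentially sound. Two remarks. First, your phrase that zero singular values ``contribute only lower-order pieces'' is not correct as stated: a zero singular value of $X$ perturbs at first order, as $t$ times a singular value of the corresponding block of $\Delta$, which is positively homogeneous but not linear in $\Delta$; what actually kills this term is the fact you invoke in your first paragraph, namely that the partial derivatives of an absolutely symmetric differentiable $f$ vanish at zero coordinates, so the correct justification is the one you use in the collapse step, not the one you give in the perturbation step. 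Second, you rightly observe that the gradient formula is only well posed when $f$ is (absolutely) symmetric, a hypothesis the paper's statement omits; it is also worth noting that in the paper's application $f=\sum_i\psi_{\tau,p}(\sigma_i)$ is not differentiable at zero coordinates (since $\psi_{\tau,p}$ behaves like $\lvert x\rvert$ near the origin), so the lemma as invoked in the boundedness argument of Lemma 5.2 strictly applies only at matrices whose singular values are all positive --- a gap in the paper's usage rather than in your proof.
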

\begin{lemma}\label{le5.2}
	The sequence $\{\mathcal{T}_{q}^{k}\}_{q=1}^{N(N-1)/2}$ generated by Algorithm \ref{ATC} are bounded.
\end{lemma}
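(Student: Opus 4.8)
The plan is to exploit the optimality condition of the $\mathcal{M}_q$-subproblem to control the Lagrange multipliers. Since $\mathcal{M}_q^{k+1}$ minimizes \eqref{UPMM}, the first-order optimality condition (using Lemma~\ref{le5.1} to differentiate the composite function $\|\cdot\|_{\tau p}$ on the frontal slices in the Fourier domain) reads
\begin{equation*}
	0\in\beta_q\,\partial\|\mathcal{M}_{q<q>}^{k+1}\|_{\tau p}-\rho_q^k\Bigl(\mathcal{B}^{k+1}-\mathcal{M}_q^{k+1}+\tfrac{\mathcal{T}_q^k}{\rho_q^k}\Bigr)
	=\beta_q\,\partial\|\mathcal{M}_{q<q>}^{k+1}\|_{\tau p}-\mathcal{T}_q^{k+1},
\end{equation*}
where the last equality uses the multiplier update in \eqref{UPT}. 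Hence $\mathcal{T}_q^{k+1}\in\beta_q\,\partial\|\mathcal{M}_{q<q>}^{k+1}\|_{\tau p}$, i.e. the multiplier is (up to the scalar $\beta_q$) a subgradient of the tensor $p$-th order $\tau$ norm evaluated at the current iterate.

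Next I would bound this subgradient. By Lemma~\ref{le5.1} applied to each frontal slice $\bar{\mathcal{M}}_{q<q>}^{(j)}$ with SVD $U^{(j)}\mathrm{diag}(\sigma(\bar{\mathcal{M}}_{q<q>}^{(j)}))V^{(j)H}$, the corresponding gradient component has singular values $\psi'_{\tau,p}(\sigma_i(\bar{\mathcal{M}}_{q<q>}^{(j)}))$. By Theorem~\ref{propert}, part~\textbf{4}, $\psi'_{\tau,p}$ is non-negative and monotonically non-increasing on $[0,+\infty)$, so $0\le\psi'_{\tau,p}(x)\le\psi'_{\tau,p}(0)=1$ for all $x\ge0$. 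Therefore every singular value of each frontal slice of the subgradient is bounded by $1$, which gives a uniform bound on $\|\mathcal{T}_q^{k+1}\|_F$ (explicitly, something like $\|\mathcal{T}_q^{k+1}\|_F\le\beta_q\sqrt{R\,\mathit{J}_3}$ after accounting for the $1/\mathit{J}_3$ factor and the Fourier-domain norm equivalence), independent of $k$. Summing over $q$ yields boundedness of the whole collection $\{\mathcal{T}_q^k\}_{q=1}^{N(N-1)/2}$.

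The main obstacle I anticipate is handling the nonsmoothness of $\|\cdot\|_{\tau p}$ carefully: the function $\psi_{\tau,p}$ is smooth on $[0,+\infty)$ by Theorem~\ref{propert}, part~\textbf{1}, but the spectral function $X\mapsto\sum_i\psi_{\tau,p}(\sigma_i(X))$ is only differentiable where $X$ has distinct nonzero singular values, and Lemma~\ref{le5.1} as stated presupposes differentiability of $f$. I would either invoke the fact that $f(y)=\sum_i\psi_{\tau,p}(|y_i|)$ is indeed differentiable (since $\psi_{\tau,p}$ is smooth and even with $\psi'_{\tau,p}(0)=1$ finite), so the composite is differentiable and Lemma~\ref{le5.1} applies directly, or pass to the Clarke/limiting subdifferential and note that every element still has the spectral form with entries $\psi'_{\tau,p}(\sigma_i)\in[0,1]$. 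Either way the uniform bound on $\psi'_{\tau,p}$ is what drives the argument; the rest is the routine translation between the Frobenius norm of a third-order tensor and the Frobenius norms of the frontal slices of its DFT.
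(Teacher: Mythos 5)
Your proposal is correct and follows essentially the same route as the paper's proof: derive $\mathcal{T}_q^{k+1}\in\beta_q\,\partial\|\mathcal{M}_{q<q>}^{k+1}\|_{\tau p}$ from the first-order optimality of \eqref{UPMM} combined with the multiplier update, then bound the subgradient via Lemma \ref{le5.1} and the uniform bound on $\psi'_{\tau,p}$ from Theorem \ref{propert}. Your explicit bound $\psi'_{\tau,p}(x)\leqslant\psi'_{\tau,p}(0)=1$ and your remark on the differentiability of the spectral function are slightly more careful than the paper's generic constant $C$, but the argument is the same.
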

\begin{proof}
	From the Lemma \ref{le5.1}, for a tensor $\mathcal{Q}\in\mathbb{R}^{\mathit{J}_{1}\times\mathit{J}_{2}\times\mathit{J}_{3}}$, we have
	\begin{equation*}
		\frac{\partial \|\bar{\mathcal{Q}}^{(j_{3})}\|_{\tau p}}{\partial \bar{\mathcal{Q}}^{(j_{3})}}=\bar{\mathcal{U}}^{(j_{3})}\text{diag}\left( \frac{\partial \psi_{\tau,p}(\sigma_{1}(\bar{\mathcal{Q}}^{(j_{3})}))}{\partial \sigma_{1}(\bar{\mathcal{Q}}^{(j_{3})})},\cdots,\frac{\partial \psi_{\tau,p}(\sigma_{R}(\bar{\mathcal{Q}}^{(j_{3})}))}{\partial \sigma_{R}(\bar{\mathcal{Q}}^{(j_{3})})}\right) \bar{\mathcal{V}}^{(j_{3})T},
	\end{equation*}
	where $R=\min\{\mathit{J}_{1},\mathit{J}_{2}\}$. Based on the proof of Theorem \ref{propert}, it is not difficult to obtain the following expression:
	\begin{equation*}
		\frac{\partial \psi_{\tau,p}(\sigma_{i}(\bar{\mathcal{Q}}^{(j_{3})}))}{\partial \sigma_{i}(\bar{\mathcal{Q}}^{(j_{3})})}\leqslant C,~\forall~i=1,\dots,R,~\Rightarrow~\|\frac{\partial \|\bar{\mathcal{Q}}^{(j_{3})}\|_{\tau p}}{\partial \bar{\mathcal{Q}}^{(j_{3})}}\|_{F}^2\leqslant~RC.
	\end{equation*}
	Therefore,
	\begin{equation*}
		\frac{\partial \|\bar{\mathcal{Q}}\|_{\tau p}}{\partial \bar{\mathcal{Q}}}=\left[ \frac{\partial \|\bar{\mathcal{Q}}^{(1)}\|_{\tau p}}{\partial \bar{\mathcal{Q}}^{(1)}}\lvert\cdots\lvert\frac{\partial \|\bar{\mathcal{Q}}^{(\mathit{J}_{3})}\|_{\tau p}}{\partial \bar{\mathcal{Q}}^{(\mathit{J}_{3})}} \right] 
	\end{equation*}
	is also bounded. From $\bar{\mathcal{Q}}=\text{\tt fft}(\mathcal{Q},[~],3)=\mathcal{Q}\times_{3}F$ and the chain rule, we conclude that 
	\begin{equation*}
		\|\partial_{\mathcal{Q}}\|\mathcal{Q}\|_{\tau p}\|_F^2=\|\frac{\partial \|\mathcal{Q}\|_{\tau p}}{\partial \mathcal{Q}}\|_{F}^2=\|\frac{\partial \|\mathcal{Q}\|_{\tau p}}{\partial \bar{\mathcal{Q}}}\times_{3}F^{H}\|_{F}^2\leqslant\mathit{J}_3 RC
	\end{equation*}
	is bounded. From the first-order optimality of \eqref{UPMM}, in $\mathcal{M}_{q<q>}$, we can deduce
	\begin{align*}
		\mathbf{0}&\in\beta_{q} \partial_{\mathcal{M}^{k+1}_{q<q>}}\|\mathcal{M}^{k+1}_{q<q>}\|_{\tau p}-\rho_{q}^{k}(\mathcal{B}^{k+1}-\mathcal{M}^{k+1}_{q}+\frac{\mathcal{T}_{q}^{k}}{\rho_{q}^{k}})
		\\&=\beta_{q} \partial_{\mathcal{M}^{k+1}_{q<q>}}\|\mathcal{M}^{k+1}_{q<q>}\|_{\tau p}-\mathcal{T}_{q}^{k+1}.
	\end{align*}
	Therefore, the sequence $\{\mathcal{T}_{q}^{k}\}_{q=1}^{N(N-1)/2}$ are bounded.
\end{proof}
\begin{theorem}[convergence analysis]\label{th5.1}
	The sequences $\{\mathcal{B}^{k}\}$ and $\{\mathcal{M}_{q}^{k}\}_{q=1}^{N(N-1)/2}$ generated by Algorithm \ref{ATC} are Cauchy sequences and convergent.
\end{theorem}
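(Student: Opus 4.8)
The plan is to exploit the geometrically increasing penalty schedule $\rho_q^{k}=\mu^{k}\rho_q^{0}$ with $\mu=1.05>1$: under this schedule the consecutive increments of $\{\mathcal{B}^{k}\}$ turn out to decay like $\mu^{-k}$, hence are summable, which is exactly what is needed for the Cauchy property. Two preliminary facts are recorded first. (i) A feasibility invariant: since $\mathcal{B}^{0}=\mathcal{P}_{\Omega}(\mathcal{O})$ and every update \eqref{UPB} has the shape $\mathcal{B}^{k+1}=\mathcal{P}_{\Omega^{c}}(\cdot)+\mathcal{P}_{\Omega}(\mathcal{O})$, an induction using $\mathcal{P}_{\Omega}\mathcal{P}_{\Omega^{c}}=0$ gives $\mathcal{P}_{\Omega}(\mathcal{B}^{k})=\mathcal{P}_{\Omega}(\mathcal{O})$ for all $k$, so that $\mathcal{P}_{\Omega^{c}}(\mathcal{B}^{k})+\mathcal{P}_{\Omega}(\mathcal{O})=\mathcal{B}^{k}$. (ii) By Lemma~\ref{le5.2} there is a uniform bound $\|\mathcal{T}_q^{k}\|_{F}\leqslant C$ for all $q,k$.

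The core step is to eliminate $\mathcal{M}_q$. Rearranging the multiplier recursion in \eqref{UPT} yields $\mathcal{M}_q^{k}=\mathcal{B}^{k}+(\mathcal{T}_q^{k-1}-\mathcal{T}_q^{k})/\rho_q^{k-1}$ for $k\geqslant1$. Substituting this into the averaging term of \eqref{UPB} and using $\rho_q^{k}/\rho_q^{k-1}=\mu$, the averaged quantity equals $\mathcal{B}^{k}+\mathcal{E}^{k}$, where $\mathcal{E}^{k}=\big(\mu\sum_q(\mathcal{T}_q^{k-1}-\mathcal{T}_q^{k})-\sum_q\mathcal{T}_q^{k}\big)\big/\sum_q\rho_q^{k}$. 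Since the multipliers are bounded by $C$ and $\sum_q\rho_q^{k}=\mu^{k}\sum_q\rho_q^{0}$, this gives $\|\mathcal{E}^{k}\|_{F}\leqslant C_{1}\mu^{-k}$ for a constant $C_{1}$. Feeding $\mathcal{B}^{k}+\mathcal{E}^{k}$ through $\mathcal{P}_{\Omega^{c}}$, using its linearity together with the feasibility invariant $\mathcal{P}_{\Omega^{c}}(\mathcal{B}^{k})+\mathcal{P}_{\Omega}(\mathcal{O})=\mathcal{B}^{k}$, produces $\mathcal{B}^{k+1}=\mathcal{B}^{k}+\mathcal{P}_{\Omega^{c}}(\mathcal{E}^{k})$, hence $\|\mathcal{B}^{k+1}-\mathcal{B}^{k}\|_{F}\leqslant\|\mathcal{E}^{k}\|_{F}\leqslant C_{1}\mu^{-k}$.

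From here the conclusion is immediate: for $m>k$, $\|\mathcal{B}^{m}-\mathcal{B}^{k}\|_{F}\leqslant\sum_{j\geqslant k}C_{1}\mu^{-j}=C_{1}\mu^{-k}/(1-\mu^{-1})\to0$, so $\{\mathcal{B}^{k}\}$ is Cauchy and therefore convergent in the ambient finite-dimensional space. For the auxiliary variables, rearranging \eqref{UPT} once more gives $\mathcal{M}_q^{k+1}=\mathcal{B}^{k+1}+(\mathcal{T}_q^{k}-\mathcal{T}_q^{k+1})/\rho_q^{k}$; the last term is $O(\mu^{-k})$ by the uniform bound on the $\mathcal{T}_q$'s, so $\{\mathcal{M}_q^{k}\}$ is the sum of a convergent sequence and a null sequence, hence Cauchy and convergent.

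I expect the main obstacles to be purely bookkeeping: (a) tracking the indices in the substitution so that the ``observed'' part of every iterate is exactly $\mathcal{P}_{\Omega}(\mathcal{O})$ and therefore $\mathcal{B}^{k+1}-\mathcal{B}^{k}$ is supported on $\Omega^{c}$ and controlled solely by $\mathcal{E}^{k}$; and (b) noticing that the whole argument hinges on $\mu>1$, which guarantees $\sum_k 1/\rho_q^{k}<\infty$. If the penalty parameters were kept fixed instead, the telescoping bound would fail and one would have to fall back on the classical route: show that the augmented Lagrangian $L(\mathcal{B}^{k+1},\mathcal{M}_q^{k+1},\mathcal{T}_q^{k+1})$ is monotonically (up to summable errors) non-increasing and bounded below, using strong convexity of the $\mathcal{B}$-subproblem, the descent inequalities from the two minimizations in \eqref{UPT}, and Lemma~\ref{le5.2}, then deduce boundedness of $\{\mathcal{B}^{k}\},\{\mathcal{M}_q^{k}\}$ and vanishing of the successive differences. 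I would note this alternative but carry out the direct telescoping proof above.
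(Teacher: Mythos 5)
Your proposal is correct and follows essentially the same route as the paper's proof: both rest on the boundedness of the multipliers (Lemma~\ref{le5.2}), the feasibility invariant $\mathcal{P}_{\Omega}(\mathcal{B}^{k})=\mathcal{P}_{\Omega}(\mathcal{O})$, elimination of $\mathcal{M}_{q}^{k}$ via the multiplier recursion, and the geometric decay $1/\rho_{q}^{k}=O(\mu^{-k})$ to make the increments summable and telescope. Your packaging of the increment as $\mathcal{B}^{k+1}=\mathcal{B}^{k}+\mathcal{P}_{\Omega^{c}}(\mathcal{E}^{k})$ is a slightly cleaner presentation of the same computation, but not a different argument.
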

\begin{proof}
	First, regarding to $\mathcal{B}^{k+1}$, from \eqref{UPT}, we have 
	\begin{equation*}
		\mathcal{T}_{q}^{k+1}=\mathcal{T}_{q}^{k}+\rho_{q}^{k}(\mathcal{B}^{k+1}-\mathcal{M}_{q}^{k+1})\Rightarrow \mathcal{B}^{k+1}=\mathcal{M}_{q}^{k+1}+(\mathcal{T}_{q}^{k+1}-\mathcal{T}_{q}^{k})/\rho_{q}^{k}.
	\end{equation*}
	
	To begin, we observe that  $\mathcal{P}_{\Omega}(\mathcal{B}^{k+1}) = \mathcal{P}_{\Omega}(\mathcal{B}^{k})$, which leads to the following deduction:
	\begin{align*}
		\|\mathcal{B}^{k+1}-\mathcal{B}^{k}\|_{F}
		&=\|\mathcal{P}_{\Omega^{c}}(\mathcal{B}^{k+1}-\mathcal{B}^{k})\|_{F}
		\\&=\|\mathcal{P}_{\Omega^{c}}(\mathcal{B}^{k+1}-\frac{\sum_{q}\rho_{q}^{k}}{\sum_{q}\rho_{q}^{k}}\mathcal{B}^{k})\|_{F}
		\\&=\|\mathcal{P}_{\Omega^{c}}(\mathcal{B}^{k+1}-\frac{\sum_{q}\rho_{q}^{k}(\mathcal{M}_{q}^{k}+(\mathcal{T}_{q}^{k}-\mathcal{T}_{q}^{k-1})/\rho_{q}^{k-1})}{\sum_{q}\rho_{q}^{k}})\|_{F}
		\\&=\|\mathcal{P}_{\Omega^{c}}(\frac{\sum_{q}\rho_{q}^{k}((\mathcal{M}_{q}^{k}-\mathcal{T}_{q}^{k}/\rho_{q}^{k})-(\mathcal{M}_{q}^{k}+(\mathcal{T}_{q}^{k}-\mathcal{T}_{q}^{k-1})/\rho_{q}^{k-1}))}{\sum_{q}\rho_{q}^{k}})\|_{F}
		\\&=\|\mathcal{P}_{\Omega^{c}}(\frac{\sum_{q}\rho_{q}^{k}(\mathcal{T}_{q}^{k}/\rho_{q}^{k}+(\mathcal{T}_{q}^{k}-\mathcal{T}_{q}^{k-1})/\rho_{q}^{k-1})}{\sum_{q}\rho_{q}^{k}})\|_{F}
		\\&\leqslant\|\frac{\sum_{q}\rho_{q}^{k}(\mathcal{T}_{q}^{k}/\rho_{q}^{k}+(\mathcal{T}_{q}^{k}-\mathcal{T}_{q}^{k-1})/\rho_{q}^{k-1})}{\sum_{q}\rho_{q}^{k}}\|_{F}
		\\&\leqslant\sum_{q}\|(1+\mu)\mathcal{T}_{q}^{k}/\rho_{q}^{k-1}-\mathcal{T}_{q}^{k-1}/\rho_{q}^{k-1}\|_{F}
		\\&\leqslant\sum_{q}\frac{1}{\rho_{q}^{0}\mu^{k-1}}(\|(1+\mu)\mathcal{T}_{q}^{k}\|_{F}+\|\mathcal{T}_{q}^{k-1}\|_{F}).
	\end{align*}
	
	It is noted that $\{\mathcal{T}_{q}^{k}\}_{q=1}^{N(N-1)/2}$ are bounded,  and since $\lim\limits_{k\to\infty}\frac{1}{\mu^{k}}=0$, we conclude that  $\lim\limits_{k\to\infty}\|\mathcal{B}^{k+1}-\mathcal{B}^{k}\|_{F}=0$. For any $n<m$, applying the triangle inequality again yields:
	\begin{align*}
		\|\mathcal{B}^{m}-\mathcal{B}^{n}\|_{F}
		&\leqslant\sum_{k=n}^{m}\|\mathcal{B}^{k+1}-\mathcal{B}^{k}\|_{F}
		\\&\leqslant\sum_{k=n}^{m}\sum_{q}\frac{1}{\rho_{q}^{0}\mu^{k-1}}(\|(1+\mu)\mathcal{T}_{q}^{k}\|_{F}+\|\mathcal{T}_{q}^{k-1}\|_{F})
		\\&\leqslant\sum_{k=n}^{m}\frac{C_1}{\mu^{k-1}},
	\end{align*}
	where $C_{1}$ is an upper bound for  $\sum_{q}1/\rho_{q}^{0}(\|(1+\mu)\mathcal{T}_{q}^{k}\|_{F}+\|\mathcal{T}_{q}^{k-1}\|_{F})$. 
	Thus, we conclude that $\lim\limits_{m,n\to\infty} \|\mathcal{B}^{m}-\mathcal{B}^{n}\|_{F}=0$. Therefore, $\{\mathcal{B}^{k}\}$ is a Cauchy sequence, which is convergent.
	
	Finally, for the sequence  $\{\mathcal{M}_{q}^{k}\}_{q=1}^{N(N-1)/2}$, the following estimates can be derived:
	\begin{align*}
		\|\mathcal{M}_{q}^{m}-\mathcal{M}_{q}^{n}\|_{F}
		&\leqslant\sum_{k=n}^{m}\|\mathcal{M}_{q}^{k+1}-\mathcal{M}_{q}^{k}\|_{F}
		\\&\leqslant\sum_{k=n}^{m}\|\mathcal{B}^{k+1}-(\mathcal{T}_{q}^{k+1}-\mathcal{T}_{q}^{k})/\rho_{q}^{k}-\mathcal{B}^{k}+(\mathcal{T}_{q}^{k}-\mathcal{T}_{q}^{k-1})/\rho_{l_1l_2}^{k-1}\|_{F}
		\\&\leqslant\sum_{k=n}^{m}\|\mathcal{B}^{k+1}-\mathcal{B}^{k}\|_{F}+\frac{1}{\rho_{q}^{k}}\|\mathcal{T}_{q}^{k+1}\|_{F}+\frac{1+\mu}{\rho_{q}^{k}}\|\mathcal{T}_{q}^{k}\|_{F}+\frac{1}{\rho_{q}^{k-1}}\|\mathcal{T}_{q}^{k-1}\|_{F}
		\\&\leqslant\sum_{k=n}^{m}\frac{C_1+C_2}{\mu^{k-1}},
	\end{align*}
	where $C_{2}$ is an upper bound for the following expression: $\frac{1}{\rho_{q}^{0}\mu}\|\mathcal{T}_{q}^{k+1}\|_{F}+\frac{1+\mu}{\rho_{q}^{0}\mu}\|\mathcal{T}_{q}^{k}\|_{F}+\frac{1}{\rho_{q}^{0}}\|\mathcal{T}_{q}^{k-1}\|_{F}$. Thus, we conclude that $\lim\limits_{m,n\to\infty} \|\mathcal{M}_{q}^{m}-\mathcal{M}_{q}^{n}\|_{F}=0$. Therefore, the sequence $\{\mathcal{M}_{q}^{k}\}_{q=1}^{N(N-1)/2}$ forms a Cauchy sequence, which implies its convergence.
\end{proof}
\begin{theorem}[Karush-Kuhn-Tuker conditions]
	Suppose that $\mathcal{N}^{k}=\{\mathcal{B}^{k},\mathcal{M}_{q}^{k},\mathcal{T}_{q}^{k}\}$ is generated from the proposed Algorithm \ref{ATC}, then the limit point $\mathcal{N}^{\star}=\{\mathcal{B}^{\star},\mathcal{M}_{q}^{\star},\mathcal{T}_{q}^{\star}\}$ is a stationary point, which is $\mathbf{0}\in\partial L(\mathcal{B}^{\star},\mathcal{M}_{q}^{\star},\mathcal{T}_{q}^{\star})$, or equivalently,
	\begin{equation*}
		\mathcal{B}^{\star}-\mathcal{M}_{q}^{\star}=\mathbf{0},~~
		\mathcal{T}^{\star}_{q<q>}\in\partial_{\mathcal{M}_{q<q>}}\|\mathcal{M}_{q<q>}^{\star}\|_{\tau p},~~	
		\sum_{q}\mathcal{T}_{q}^{\star}=\partial_{\mathcal{B}}\Phi(\mathcal{B}^{\star}).
	\end{equation*}
\end{theorem}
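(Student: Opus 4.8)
The plan is to pass to the limit in the first-order optimality conditions of the two ADMM subproblems \eqref{UPBB} and \eqref{UPMM}, together with the multiplier recursion \eqref{UPT}, using the convergence of $\{\mathcal{B}^k\}$ and $\{\mathcal{M}_q^k\}$ from Theorem \ref{th5.1} and the boundedness of $\{\mathcal{T}_q^k\}$ from Lemma \ref{le5.2}. First I would fix notation: by Theorem \ref{th5.1} we may write $\mathcal{B}^k\to\mathcal{B}^\star$ and $\mathcal{M}_q^k\to\mathcal{M}_q^\star$, and since $\{\mathcal{T}_q^k\}$ is bounded we extract a subsequence along which $\mathcal{T}_q^{k}\to\mathcal{T}_q^\star$; all limits below are understood along this subsequence. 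Because the $\mathcal{B}$-update \eqref{UPB} enforces $\mathcal{P}_\Omega(\mathcal{B}^k)=\mathcal{P}_\Omega(\mathcal{O})$ at every iteration, the limit satisfies $\mathcal{P}_\Omega(\mathcal{B}^\star)=\mathcal{P}_\Omega(\mathcal{O})$, so $\Phi(\mathcal{B}^\star)=0$ and $\partial_{\mathcal{B}}\Phi(\mathcal{B}^\star)$ is the (well-defined) normal cone of the feasible affine set.

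The primal-feasibility relation $\mathcal{B}^\star-\mathcal{M}_q^\star=\mathbf{0}$ comes at once from rewriting \eqref{UPT} as $\mathcal{B}^{k+1}-\mathcal{M}_q^{k+1}=(\mathcal{T}_q^{k+1}-\mathcal{T}_q^k)/\rho_q^k$, whose right-hand side tends to $\mathbf{0}$ since the numerator is bounded and $\rho_q^k=\rho_q^0\mu^k\to\infty$. For the $\mathcal{M}_q$-relation I would reuse the first-order optimality of \eqref{UPMM} already derived inside the proof of Lemma \ref{le5.2}, namely $\mathbf{0}\in\beta_q\,\partial_{\mathcal{M}_{q<q>}^{k+1}}\|\mathcal{M}_{q<q>}^{k+1}\|_{\tau p}-\mathcal{T}_q^{k+1}$, and let $k\to\infty$; since $\psi_{\tau,p}$ has a bounded, Lipschitz derivative (Theorem \ref{propert}, part 4), $\|\cdot\|_{\tau p}$ is locally Lipschitz and the spectral map $\mathcal{X}\mapsto\partial\|\mathcal{X}\|_{\tau p}$ is outer semicontinuous (closed graph), so the inclusion survives and gives $\mathcal{T}^\star_{q<q>}\in\partial_{\mathcal{M}_{q<q>}}\|\mathcal{M}_{q<q>}^\star\|_{\tau p}$. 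For the $\mathcal{B}$-relation, the optimality of \eqref{UPBB} reads $\mathbf{0}\in\partial_{\mathcal{B}}\Phi(\mathcal{B}^{k+1})+\sum_q\rho_q^k(\mathcal{B}^{k+1}-\mathcal{M}_q^k)+\sum_q\mathcal{T}_q^k$; substituting $\rho_q^k(\mathcal{B}^{k+1}-\mathcal{M}_q^k)=(\mathcal{T}_q^{k+1}-\mathcal{T}_q^k)+\rho_q^k(\mathcal{M}_q^{k+1}-\mathcal{M}_q^k)$ reduces it to $\mathbf{0}\in\partial_{\mathcal{B}}\Phi(\mathcal{B}^{k+1})+\sum_q\mathcal{T}_q^{k+1}+\sum_q\rho_q^k(\mathcal{M}_q^{k+1}-\mathcal{M}_q^k)$, and once the last sum is shown to vanish, convexity of $\Phi$ (hence closedness of the graph of $\partial_{\mathcal{B}}\Phi$) allows the limit passage $\sum_q\mathcal{T}_q^\star\in\partial_{\mathcal{B}}\Phi(\mathcal{B}^\star)$.

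\textbf{The hard part} will be the indeterminate $\infty\cdot 0$ terms in the $\mathcal{B}$-optimality condition: one must show $\rho_q^k(\mathcal{M}_q^{k+1}-\mathcal{M}_q^k)\to\mathbf{0}$ (equivalently, that rescaled multiplier increments of the form $\mu(\mathcal{T}_q^k-\mathcal{T}_q^{k-1})$ tend to zero). The summability of $\|\mathcal{M}_q^{k+1}-\mathcal{M}_q^k\|_F$ used to obtain the Cauchy property is not by itself enough against the geometric growth $\rho_q^k=\rho_q^0\mu^k$, so I would sharpen the successive-difference estimate from the proof of Theorem \ref{th5.1}, tracking the $O(\mu^{-k})$ decay of the iterate gaps precisely and multiplying through by $\rho_q^k$, to force this residual to zero. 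A secondary technical point pervading all the limit passages is that $\|\cdot\|_{\tau p}$ is merely locally Lipschitz, not differentiable at tensors with a vanishing singular value (since $\psi_{\tau,p}$ has a corner at the origin), so one cannot simply invoke continuity of a gradient and must rely throughout on the outer semicontinuity of the subdifferential, which is precisely what the bounded-Lipschitz-derivative property in Theorem \ref{propert} guarantees.
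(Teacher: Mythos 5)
Your proposal follows essentially the same route as the paper: primal feasibility is obtained from the multiplier recursion divided by $\rho_q^k\to+\infty$, and the two subdifferential inclusions come from passing to the limit in the first-order optimality conditions of \eqref{UPMM} and \eqref{UPBB}, using the boundedness of $\{\mathcal{T}_q^k\}$ from Lemma \ref{le5.2} and the convergence from Theorem \ref{th5.1}. Where you go beyond the paper is in flagging the $\infty\cdot 0$ residual $\rho_q^k(\mathcal{B}^{k+1}-\mathcal{M}_q^{k})$ in the $\mathcal{B}$-optimality condition as the hard part, and you are right that it is a genuine issue: the paper simply writes $\lim_k\bigl(\mathcal{T}_q^k+\rho_q^k(\mathcal{B}^{k+1}-\mathcal{M}_q^k)\bigr)=\mathcal{T}_q^\star+\rho_q^\star(\mathcal{B}^\star-\mathcal{M}_q^\star)=\mathcal{T}_q^\star$, which is formally meaningless because there is no finite $\rho_q^\star$. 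Be aware, though, that your proposed repair (sharpen the successive-difference estimates and multiply through by $\rho_q^k$) will not close as stated: the estimates in the proof of Theorem \ref{th5.1} give only $\|\mathcal{M}_q^{k+1}-\mathcal{M}_q^k\|_F=O(\mu^{-k})$, so multiplying by $\rho_q^k=\rho_q^0\mu^k$ yields $O(1)$ rather than $o(1)$. One genuinely needs either a faster decay of the iterate gaps or the decomposition $\rho_q^k(\mathcal{B}^{k+1}-\mathcal{M}_q^k)=(\mathcal{T}_q^{k+1}-\mathcal{T}_q^k)+\rho_q^k(\mathcal{M}_q^{k+1}-\mathcal{M}_q^k)$ together with a separate argument that both summands vanish along the chosen subsequence; neither your sketch nor the paper supplies this, so on this point your attempt is no weaker than the published proof, merely more honest about the gap. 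Your remark that the limit passage in the $\mathcal{M}_q$-inclusion requires outer semicontinuity of $\partial\|\cdot\|_{\tau p}$ (rather than continuity of a gradient) is likewise a refinement the paper silently assumes.
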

\begin{proof}
	From Lemma \ref{le5.2}, Theorem \ref{th5.1} and the Bolzano-Weierstrass theorem, there must be at least one accumulation point of the sequence $\{\mathcal{N}^{k}\}_{k=1}^{+\infty}$. We denote one of the points $\mathcal{N}^{\star}=\{\mathcal{B}^{\star},\mathcal{M}_{q}^{\star},\mathcal{T}_{q}^{\star}\}$. Without loss of generality, we assume $\{\mathcal{N}^{k}\}_{k=1}^{+\infty}$ converge to $\mathcal{N}^{\star}$. Since $(\mathcal{T}_{q}^{k+1}-\mathcal{T}_{q}^{k})/\rho_{q}^{k}=\mathcal{B}^{k+1}-\mathcal{M}_{q}^{k+1}$, we have
	\begin{equation*}
		\lim\limits_{k\to\infty}(\mathcal{B}^{k+1}-\mathcal{M}_{q}^{k+1})=\lim\limits_{k\to\infty}(\mathcal{T}_{q}^{k+1}-\mathcal{T}_{q}^{k})/\rho_{q}^{k}=\textbf{0}.
	\end{equation*}
	Then $\mathcal{B}^{\star}-\mathcal{M}_{q}^{\star}=\textbf{0}$ is obtained. From \eqref{UPMM}, we can deduce 
	\begin{equation*}
		\textbf{0}\in\beta_{q}\partial_{\mathcal{M}_{q<q>}}\|\mathcal{M}_{q<q>}^{k+1}\|_{\tau p}+\rho_{q}^{k}(\mathcal{M}^{k+1}_{q<q>}-\mathcal{B}^{k+1}_{q<q>})+\mathcal{T}^{k}_{q<q>}.
	\end{equation*}
	Then it follows that $\mathcal{T}^{\star}_{q<q>}\in\partial_{\mathcal{M}_{q<q>}}\|\mathcal{M}_{q<q>}^{\star}\|_{\tau p}$.
	
	Similarly, from \eqref{UPBB}, we have
	\begin{align*}
		\textbf{0}=\sum_{q}\mathcal{T}_{q}^{k}+\rho_{q}^{k}(\mathcal{B}^{k+1}-\mathcal{M}_{q}^{k})+\partial_{\mathcal{B}}\Phi(\mathcal{B}^{k+1}),
		\\\lim\limits_{k\to\infty}(\mathcal{T}_{q}^{k}+\rho_{q}^{k}(\mathcal{B}^{k+1}-\mathcal{M}_{q}^{k}))=(\mathcal{T}_{q}^{\star}+\rho_{q}^{\star}(\mathcal{B}^{\star}-\mathcal{M}_{q}^{\star}))=\mathcal{T}_{q}^{\star}.
	\end{align*}
	Then $\sum_{q}\mathcal{T}_{q}^{\star}=\partial_{\mathcal{B}}\Phi(\mathcal{B}^{\star})$, where
	\begin{equation*}
		\begin{cases}
			\partial_{\mathcal{B}}\Phi(\mathcal{B}^{\star})_{j_{1},j_{2},\cdots,j_{N}}=0,~&\mathcal{P}_{\Omega}(\mathcal{B}_{j_{1},j_{2},\cdots,j_{N}})\neq\mathcal{P}_{\Omega}(\mathcal{O}_{j_{1},j_{2},\cdots,j_{N}}),
			\\\partial_{\mathcal{B}}\Phi(\mathcal{B}^{\star})_{j_{1},j_{2},\cdots,j_{N}}=1,&\mathcal{P}_{\Omega}(\mathcal{B}_{j_{1},j_{2},\cdots,j_{N}})=\mathcal{P}_{\Omega}(\mathcal{O}_{j_{1},j_{2},\cdots,j_{N}}).
		\end{cases}
	\end{equation*} 
	Therefore $\{\mathcal{B}^{\star},\mathcal{M}_{q}^{\star},\mathcal{T}_{q}^{\star}\}$ satisfies the the Karush-Kuhn-Tuker conditions of the Lagrange function $ L(\mathcal{B},\mathcal{M}_{q},\mathcal{T}_{q})$.
\end{proof}

\section{Experiments}
The performance of the proposed MPCP method is evaluated using several quality metrics, including peak signal-to-noise rate (PSNR) value, structural similarity (SSIM) value \cite{1284395}, feature similarity (FSIM) value \cite{5705575}, and the erreur relative globale adimensionnelle de synth$\grave{e}$se (ERGAS) value \cite{2432002352}. Among these metrics, higher PSNR, SSIM, and FSIM values indicate better performance, while a lower ERGAS value is preferred. All experiments were conducted on a Windows 11 platform, using MATLAB (R2023b), with an 13th Gen Intel Core i5-13600K 3.50 GHz processor and 32 GB of RAM. Let $J^c$ and $J^r$ represent the complete and reference images, respectively, while $J_1$ and $J_2$ denote the spatial dimensions of the image.

\textbf{1.} PSNR is defined as:
$\text{PSNR}:=10\log_{10}(\frac{255^{2}\times J_{1}J_{2}}{\|J^{r}-J^{c}\|^{2}})$.

\textbf{2.} SSIM is defined as:
$\text{SSIM}:=\dfrac{(2\mu_{J^{c}}\mu_{J^{r}}+c_{1})(2\sigma_{J^{c}J^{r}}+c_{2})}{(\mu^{2}_{J^{c}}+\mu^{2}_{J^{ref}}+c_{1})(\sigma_{J^{c}}^{2}+\sigma_{J^{r}}^{2}+c_{2})},$
where $\mu_{J^{c}}$ is average of $J^{c}$; $\sigma_{J^{c}}$ is the variance of $J^{c}$, and $\sigma_{J^{c}J^{r}}$ is the covariance of $J^{c}$ and $J^{r}$.

\textbf{3.} FSIM is defined as:
$\text{FSIM}:=\dfrac{\sum_{x\in\Omega S_{L}(x)\cdot PC_{m}(x)}}{\sum_{x\in\Omega PC_{m}(x)}},$
where $S_{L}(x)$ is derived from the phase congruency and the image gradient magnitude of
$J^{c}$ and $J^{r}$ ; $PC_{m}(x)$ is the maximum phase congruency of $PC_{c}$ (for $J^{c}$) and $PC_{r}$ (for
$J^{r}$ ); and $\Omega$ represents the entire airspace of the image. 

When calculating the PSNR, SSIM, and FSIM values for the tensor image, each slice of the tensor is computed first, and then the average value is taken across slices.

\textbf{4.} ERGAS for $\mathcal{A},\mathcal{B}\in \mathbb{R}^{\mathit{J}_{1}\times\mathit{J}_{2}\times\mathit{J}_{3}}$ is defined as:
\begin{equation*}
	\text{ERGAS}:=100\sqrt{\dfrac{\sum_{j=1}^{\mathit{J}_{3}}(\text{\tt mse}(\mathcal{A}(:,:,j)-\mathcal{B}(:,:,j))/(\text{\tt mean2}(\mathcal{A}(:,:,j))))}{\mathit{J}_{3}}},
\end{equation*}
where $\text{\tt mse}$ is mean squared error performance function, and $\text{\tt mean2}$ is average of matrix elements.

\textbf{5.} For a tensor $\mathcal{B}\in \mathbb{R}^{\mathit{J}_{1}\times\mathit{J}_{2}\times\cdots
	\times\mathit{J}_{N}}$, its sampling rate (SR) is defined as $\frac{\text{ num}(\Omega)}{\prod_{j=1}^{N}\mathit{J}_{j}}\times100\%$, where $\Omega$ denotes an index set of observed data; $\text{ num}(\Omega)$ is the total number of elements in index set $\Omega$.
\subsection{Low-rank tensor completion}
In this section, four sets of tensor data are tested: magnetic resonance imaging (MRI), multispectral image (MSI), color video (CV), and Light Field Images (LFI). Among them, MRI and MSI are third-order tensors, CV is a fourth-order tensor, and LFI is a fifth-order tensor. The data sampling method used is purely random sampling. The following LRTC methods are compared: HaLRTC \cite{6138863} and TMCP \cite{chen2024spatiotemporal}, which represent the state-of-the-art approach based on Tucker decomposition. And the t-SVD-based methods: TNN \cite{7782758}, PSTNN \cite{1122020112680}, DCTTNN \cite{Lu_2019_CVPR}, TTNN \cite{ng2020patched}, FTNN \cite{9115254}, WSTNN \cite{2020170}, t-$\epsilon$-LogDet \cite{9872134}, and BEMCP \cite{ZHANG2024110253}. Each method is tested using its respective optimal parameters. Since the t-SVD-based methods are applicable only to third-order tensors, for the tests involving fourth-order and fifth-order tensors, the data are reshaped into third-order tensors before applying these methods.
\subsubsection{MRI completion}
The performance of the proposed method, as well as the comparative methods, is evaluated on MRI\footnote{http://brainweb.bic.mni.mcgill.ca/brainweb/selection\_normal.html} data with dimensions $181\times217\times181$. Initially, the visual quality of the MRI data recovered at sampling rates of 5\%, 10\% and 20\% are demonstrated in Figs. \ref{MRIF1}-\ref{MRIF3}. It is evident that the proposed method outperforms the comparative methods. As shown in Fig. \ref{MRIF1}, the image recovered using the proposed method at a 5\% sampling rate exhibits successful overall reconstruction. 
With an increase in the sampling rate, further image details are progressively restored, as illustrated in Figs. \ref{MRIF2} and \ref{MRIF3}.

\begin{figure}[!tbh] 
	\centering  
	\vspace{0cm} 
	\includegraphics[width=0.9\linewidth]{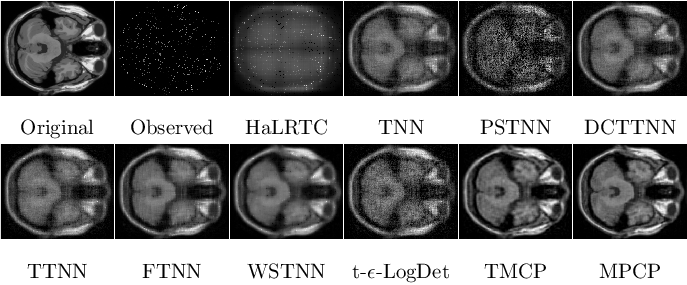} 
	\caption{Visual results of the 40th slice of the MRI data at a sampling rate of 5\%.}
	\label{MRIF1}
\end{figure}
\begin{figure}[!tbh] 
	\centering  
	\vspace{0cm} 
	\includegraphics[width=0.9\linewidth]{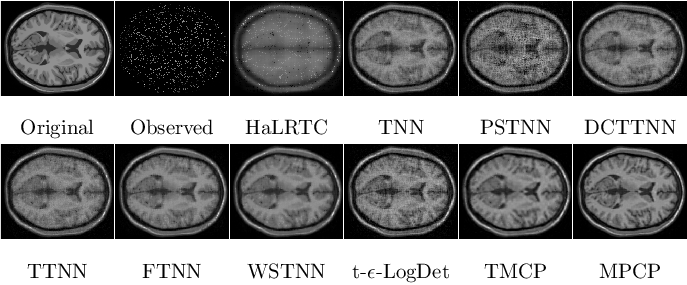} 
	\caption{Visual results of the 80th slice of the MRI data at a sampling rate of 10\%.}
	\label{MRIF2}
\end{figure}
\begin{figure}[!tbh] 
	\centering  
	\vspace{0cm} 
	\includegraphics[width=0.9\linewidth]{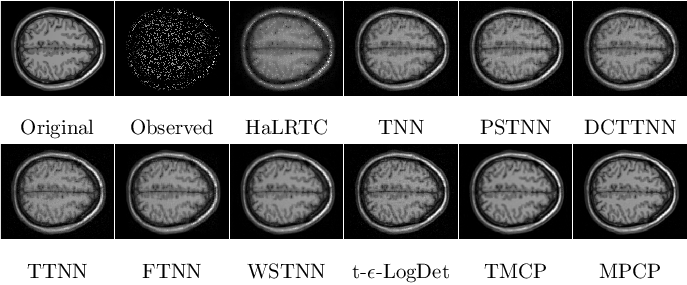} 
	\caption{Visual results of the 120th slice of the MRI data at a sampling rate of 20\%.}
	\label{MRIF3}
\end{figure}

Subsequently, the average quantitative results of the frontal slices of the MRI, restored by all methods at various sampling rates, are presented in Table \ref{table1}. The optimal results are highlighted in bold. The proposed method demonstrates a 2.7 dB higher PSNR value compared to the suboptimal LPRN method at a 5\% sampling rate, alongside significant improvements in SSIM, FSIM, and ERGAS values.
%
\begin{table}[!tbh]
	\centering
	\caption{The PSNR, SSIM, FSIM and ERGAS values for MRI tested by observed and the ten utilized LRTC methods.}\label{table1}
	\medskip\small\renewcommand{\arraystretch}{1.15}
	\resizebox{\textwidth}{!}{%
		\begin{tabular}{||c|cccc|cccc|cccc||}
			\hline
			SR         & \multicolumn{4}{c|}{5\%}                                             & \multicolumn{4}{c|}{10\%}                                           & \multicolumn{4}{c||}{20\%}                                           \\ \hline
			Method     & PSNR            & SSIM           & FSIM           & ERGAS            & PSNR            & SSIM           & FSIM           & ERGAS           & PSNR            & SSIM           & FSIM           & ERGAS           \\ \hline
			Observed   & 11.399          & 0.310          & 0.530          & 1021.079         & 11.635          & 0.323          & 0.565          & 993.760         & 12.148          & 0.350          & 0.612          & 936.741         \\
			HaLRTC     & 17.297          & 0.298          & 0.637          & 537.432          & 20.146          & 0.439          & 0.726          & 389.074         & 24.454          & 0.660          & 0.829          & 234.870         \\
			TNN        & 22.691          & 0.470          & 0.743          & 304.114          & 26.097          & 0.643          & 0.812          & 204.863         & 29.963          & 0.799          & 0.882          & 130.769         \\
			PSTNN      & 16.166          & 0.197          & 0.588          & 608.853          & 22.453          & 0.439          & 0.723          & 307.529         & 29.566          & 0.767          & 0.870          & 137.477         \\
			DCTTNN     & 23.280          & 0.507          & 0.755          & 277.284          & 26.066          & 0.655          & 0.815          & 198.957         & 30.167          & 0.818          & 0.888          & 122.854         \\
			TTNN       & 23.363          & 0.494          & 0.755          & 275.704          & 26.384          & 0.651          & 0.818          & 189.680         & 30.440          & 0.814          & 0.890          & 117.127         \\
			FTNN       & 24.575          & 0.688          & 0.832          & 240.235          & 27.658          & 0.806          & 0.885          & 164.653         & 31.833          & 0.908          & 0.938          & 99.941          \\
			WSTNN      & 25.519          & 0.708          & 0.824          & 212.073          & 29.036          & 0.836          & 0.887          & 139.526         & 33.441          & 0.929          & 0.940          & 83.149          \\
			t-$\epsilon$-LogDet & 21.161          & 0.340          & 0.677          & 357.989          & 26.670          & 0.613          & 0.798          & 192.217         & 31.480          & 0.811          & 0.888          & 109.610         \\
			TMCP       & 27.862          & 0.735          & 0.839          & 154.266          & 30.595          & 0.861          & 0.891          & 112.631         & 34.360          & 0.944          & 0.944          & 73.017          \\
			MPCP       & \textbf{30.573} & \textbf{0.855} & \textbf{0.895} & \textbf{115.228} & \textbf{33.700} & \textbf{0.920} & \textbf{0.932} & \textbf{80.246} & \textbf{37.339} & \textbf{0.962} & \textbf{0.963} & \textbf{52.177} \\ \hline
		\end{tabular}%
	}
\end{table}

\subsubsection{MSI completion}
\begin{figure}[!tbh] 
	\centering  
	\vspace{0cm} 
	\includegraphics[width=0.9\linewidth]{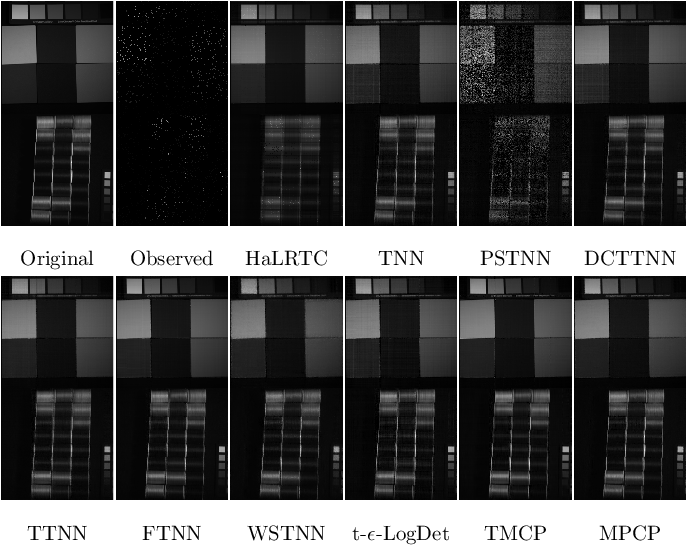} 
	\caption{Visual results for MSI data. The rows of MSIs are in order: sponges, thread\_spools. SR: 5\%. The corresponding bands in each row are: 10, 5.}
	\label{MSIF1}
\end{figure}
The test dataset used in this experiment consists of six MSIs: chart\_and\_stuffed\_toy, thread\_spools, fake\_and\_real\_lemons, fake\_and\_real\_tomatoes, real\_and\_fake\_peppers, and sponges. All testing data have dimensions of $256\times256\times31$, where the spatial resolution is $256\times256$ and the spectral resolution is $31$. Visual results for different sampling rates and bands are presented in Figs. \ref{MSIF1}-\ref{MSIF3}. The specific MSI names and their corresponding bands are provided in the figure captions. 

As shown in Figs. \ref{MSIF1}-\ref{MSIF3}, the visual quality of the results obtained using the MPCP method surpasses that of the comparative methods at all three sampling rates. In particular, Fig. \ref{MSIF1} demonstrates that the MPCP method is more effective in tensor recovery when compared to the suboptimal methods. For example, the image recovered using the MPCP method reveals the sponges more clearly, while the image recovered using the suboptimal FTNN method still contains significant noise and artifacts.

To further emphasize the superiority of the proposed method, the average quantitative results for the six MSIs are summarized in Table \ref{table2}. The proposed method demonstrates a significant improvement, achieving at least 4.2 dB higher PSNR for image recovery compared to the suboptimal WSTNN method at both 10\% and 20\% sampling rates. Moreover, at the same sampling rates, the SSIM, FSIM, and ERGAS values obtained by the proposed method are markedly superior to those of the WSTNN method. At a 5\% sampling rate, where the suboptimal method is the TMCP method, the proposed method still outperforms TMCP method by 3.5 dB in terms of PSNR and also achieves a substantially higher SSIM value.

\begin{figure}[!tbh] 
	\centering  
	\vspace{0cm} 
	\includegraphics[width=0.9\linewidth]{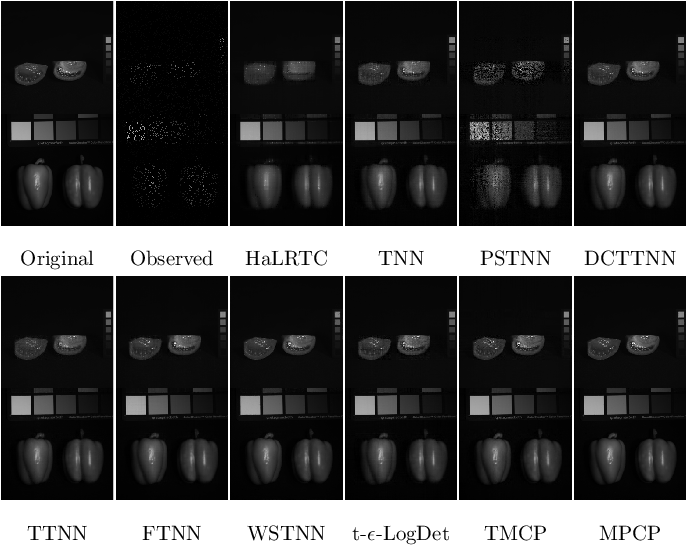} 
	\caption{Visual results for MSI data. The rows of MSIs are in order: fake\_and\_real\_tomatoes, real\_and\_fake\_peppers. SR: 10\%. The corresponding bands in each row are: 20, 15.}
	\label{MSIF2}
\end{figure}
\begin{figure}[!tbh] 
	\centering  
	\vspace{0cm} 
	\includegraphics[width=0.9\linewidth]{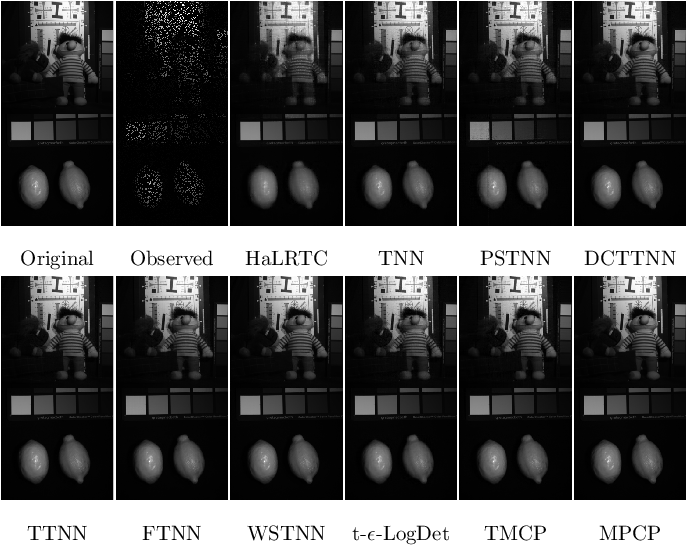} 
	\caption{Visual results for MSI data. The rows of MSIs are in order: chart\_and\_stuffed\_toy, fake\_and\_real\_lemons. SR: 20\%. The corresponding bands in each row are: 30, 25.}
	\label{MSIF3}
\end{figure}
\begin{table}[!tbh]
	\centering
	\caption{The average PSNR, SSIM, FSIM and ERGAS values for six MSIs tested by observed and the ten utilized LRTC methods.}\label{table2}
	\medskip\small\renewcommand{\arraystretch}{1.15}
	\resizebox{\textwidth}{!}{%
		\begin{tabular}{||c|cccc|cccc|cccc||}
			\hline
			SR         & \multicolumn{4}{c|}{5\%}                                            & \multicolumn{4}{c|}{10\%}                                           & \multicolumn{4}{c||}{20\%}                                           \\ \hline
			Method     & PSNR            & SSIM           & FSIM           & ERGAS           & PSNR            & SSIM           & FSIM           & ERGAS           & PSNR            & SSIM           & FSIM           & ERGAS           \\ \hline
			Observed   & 14.956          & 0.167          & 0.675          & 891.700         & 15.190          & 0.210          & 0.665          & 867.956         & 15.703          & 0.287          & 0.654          & 818.238         \\
			HaLRTC     & 28.138          & 0.867          & 0.893          & 212.430         & 33.040          & 0.931          & 0.939          & 123.174         & 38.396          & 0.971          & 0.972          & 68.326          \\
			TNN        & 32.694          & 0.897          & 0.915          & 127.707         & 37.576          & 0.954          & 0.958          & 75.302          & 43.342          & 0.983          & 0.984          & 40.452          \\
			PSTNN      & 18.744          & 0.541          & 0.660          & 565.723         & 24.036          & 0.737          & 0.804          & 314.595         & 39.247          & 0.967          & 0.968          & 67.796          \\
			DCTTNN     & 35.034          & 0.935          & 0.942          & 95.770          & 39.971          & 0.973          & 0.974          & 56.226          & 45.945          & 0.991          & 0.991          & 29.589          \\
			TTNN       & 34.907          & 0.940          & 0.946          & 97.114          & 40.447          & 0.979          & 0.980          & 51.580          & 47.066          & 0.994          & 0.994          & 24.567          \\
			FTNN       & 36.514          & 0.956          & 0.958          & 81.911          & 41.244          & 0.981          & 0.980          & 48.205          & 46.930          & 0.993          & 0.993          & 26.680          \\
			WSTNN      & 33.452          & 0.822          & 0.929          & 166.546         & 42.221          & 0.988          & 0.987          & 38.839          & 49.899          & 0.997          & 0.997          & 16.421          \\
			t-$\epsilon$-LogDet & 33.826          & 0.890          & 0.909          & 113.213         & 39.689          & 0.963          & 0.964          & 59.623          & 45.345          & 0.987          & 0.987          & 33.042          \\
			TMCP       & 37.126          & 0.949          & 0.950          & 73.914          & 42.180          & 0.978          & 0.976          & 42.218          & 47.418          & 0.991          & 0.991          & 23.654          \\
			MPCP       & \textbf{40.676} & \textbf{0.981} & \textbf{0.978} & \textbf{46.355} & \textbf{46.646} & \textbf{0.995} & \textbf{0.994} & \textbf{23.322} & \textbf{54.117} & \textbf{0.999} & \textbf{0.999} & \textbf{10.135} \\ \hline
		\end{tabular}%
	}
\end{table}

\subsubsection{CV completion}
We conduct tests on three CVs\footnote{http://trace.eas.asu.edu/yuv/}, namely akiyo, coastguard, and hall, each with dimensions 
$144 \times 176 \times 3 \times 50$, where the number of frames is $50$ and each frame is a color image of size $144 \times 176 \times 3$. The visual results for these three CVs are presented in Figs. \ref{CVF1}-\ref{CVF3}. The number of frames and corresponding sampling rates for each CV are specified in the figure captions. 

As shown in Figs. \ref{CVF1}-\ref{CVF3}, the proposed method consistently produces superior visual recovery compared to the other methods. In particular, Fig. \ref{CVF1} illustrates that the proposed method effectively restores detailed features of the person in the image, such as facial characteristics, while the comparison methods still exhibit considerable blurriness in recovering these details.

To further substantiate the superiority of the proposed method, the average quantitative results for the three CVs are summarized in Table \ref{table3}. Evaluating metrics such as PSNR, SSIM, FSIM, and ERGAS values, the proposed method outperforms the other methods at various sampling rates. Notably, it achieves a PSNR value that is at least 2.3 dB higher than the suboptimal WSTNN method.

\begin{figure}[!tbh] 
	\centering  
	\vspace{0cm} 
	\includegraphics[width=0.9\linewidth]{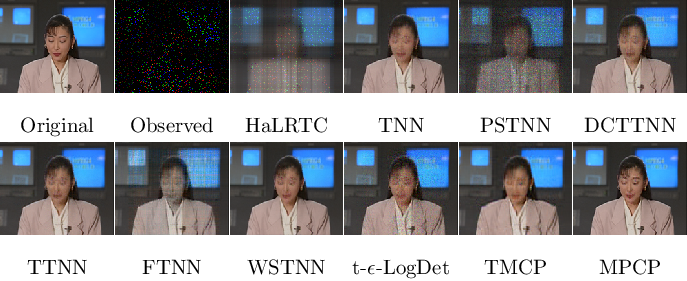} 
	\caption{Visual results of the 15th frame of the akiyo data at a sampling rate of 5\%.}
	\label{CVF1}
\end{figure}
\begin{figure}[!tbh] 
	\centering  
	\vspace{0cm} 
	\includegraphics[width=0.9\linewidth]{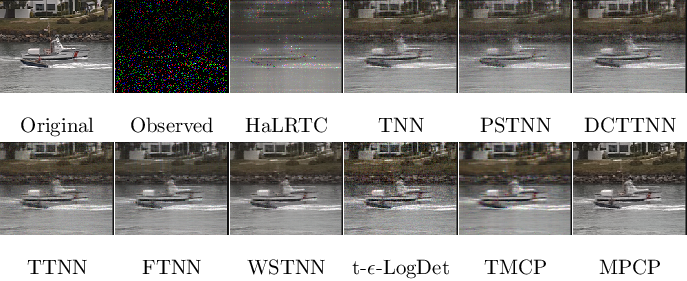} 
	\caption{Visual results of the 30th frame of the coastguard data at a sampling rate of 10\%.}
	\label{CVF2}
\end{figure}
\begin{figure}[!tbh] 
	\centering  
	\vspace{0cm} 
	\includegraphics[width=0.9\linewidth]{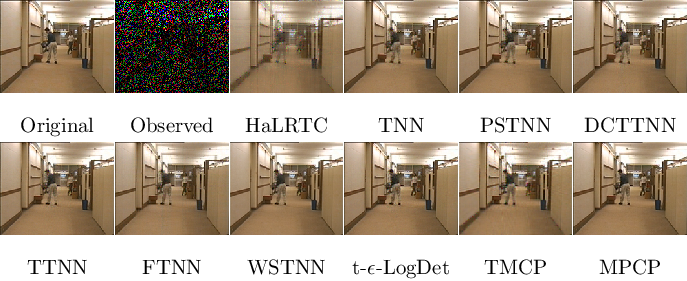} 
	\caption{Visual results of the 45th frame of the hall data at a sampling rate of 20\%.}
	\label{CVF3}
\end{figure}
\begin{table}[!tbh]
	\centering
	\caption{The average PSNR, SSIM, FSIM and ERGAS values for three CVs tested by observed and the ten utilized LRTC methods.}
	\label{table3}
	\medskip\small\renewcommand{\arraystretch}{1.15}
	\resizebox{\textwidth}{!}{%
		\begin{tabular}{||c|cccc|cccc|cccc||}
			\hline
			SR         & \multicolumn{4}{c|}{5\%}                                            & \multicolumn{4}{c|}{10\%}                                           & \multicolumn{4}{c||}{20\%}                                           \\ \hline
			Method     & PSNR            & SSIM           & FSIM           & ERGAS           & PSNR            & SSIM           & FSIM           & ERGAS           & PSNR            & SSIM           & FSIM           & ERGAS           \\ \hline
			Observed   & 6.554           & 0.012          & 0.417          & 1144.119        & 6.789           & 0.021          & 0.433          & 1113.472        & 7.301           & 0.038          & 0.456          & 1049.787        \\
			HaLRTC     & 17.036          & 0.463          & 0.685          & 345.162         & 20.935          & 0.611          & 0.769          & 221.104         & 25.049          & 0.773          & 0.862          & 141.290         \\
			TNN        & 27.879          & 0.791          & 0.898          & 111.919         & 30.760          & 0.855          & 0.931          & 84.050          & 33.835          & 0.907          & 0.955          & 61.992          \\
			PSTNN      & 15.637          & 0.326          & 0.696          & 403.452         & 27.398          & 0.796          & 0.896          & 113.445         & 33.383          & 0.906          & 0.953          & 64.321          \\
			DCTTNN     & 27.375          & 0.774          & 0.893          & 117.217         & 29.970          & 0.842          & 0.924          & 89.617          & 33.334          & 0.904          & 0.954          & 63.695          \\
			TTNN       & 28.413          & 0.805          & 0.904          & 106.046         & 31.235          & 0.866          & 0.935          & 80.345          & 34.775          & 0.923          & 0.962          & 55.778          \\
			FTNN       & 25.037          & 0.756          & 0.866          & 148.398         & 28.449          & 0.852          & 0.915          & 99.763          & 32.132          & 0.922          & 0.953          & 66.211          \\
			WSTNN      & 29.485          & 0.864          & 0.917          & 92.928          & 32.722          & 0.917          & 0.949          & 67.230          & 36.500          & 0.958          & 0.973          & 45.013          \\
			t-$\epsilon$-LogDet & 20.854          & 0.494          & 0.783          & 244.032         & 31.539          & 0.846          & 0.935          & 81.087          & 34.895          & 0.905          & 0.959          & 57.988          \\
			TMCP       & 27.286          & 0.788          & 0.879          & 112.108         & 28.845          & 0.833          & 0.903          & 94.287          & 31.138          & 0.892          & 0.935          & 73.240          \\
			MPCP       & \textbf{31.786} & \textbf{0.888} & \textbf{0.944} & \textbf{75.478} & \textbf{35.278} & \textbf{0.936} & \textbf{0.966} & \textbf{52.651} & \textbf{39.432} & \textbf{0.970} & \textbf{0.984} & \textbf{33.166} \\ \hline
		\end{tabular}%
	}
\end{table}

\subsubsection{LFI completion}
In this section, the proposed method is evaluated using the Antinous dataset from the HCI 4D LFI\footnote{https://lightfield-analysis.uni-konstanz.de/}. The size of the Antinous data is $128\times128\times3\times9\times9$, making it a fifth-order tensor. This dataset consists of $81$ different views arranged in a $9\times9$ grid. To quantitatively assess the performance of the completion method, we computed the average quantitative results for the 81 color images, each with dimensions $128\times128\times3$. The average quantitative results for the Antinous dataset are presented in Table \ref{table4}. 

Considering metrics such as PSNR, SSIM, FSIM, and ERGAS, the proposed method outperforms all other methods across various sampling rates. The suboptimal method in this comparison is the WSTNN method. Specifically, the proposed method achieves a PSNR value that is at least 4.0 dB higher than that of the suboptimal WSTNN method.

Additionally, the visual results at 5\% - 20\% sampling rates are shown in Figs. \ref{LFIF1}-\ref{LFIF3}. As illustrated, the proposed method yields the best visual recovery. Notably, the reconstruction of the statue's head is significantly clearer and more detailed compared to the other methods.
\begin{table}[!tbh]
	\centering
	\caption{The average PSNR, SSIM, FSIM and ERGAS values for LFI tested by observed and the ten utilized LRTC methods.}
	\label{table4}
	\medskip\small\renewcommand{\arraystretch}{1.15}
	\resizebox{\textwidth}{!}{%
		\begin{tabular}{||c|cccc|cccc|cccc||}
			\hline
			SR         & \multicolumn{4}{c|}{5\%}                                            & \multicolumn{4}{c|}{10\%}                                           & \multicolumn{4}{c||}{20\%}                                           \\ \hline
			Method     & PSNR            & SSIM           & FSIM           & ERGAS           & PSNR            & SSIM           & FSIM           & ERGAS           & PSNR            & SSIM           & FSIM           & ERGAS           \\ \hline
			Observed   & 8.107           & 0.014          & 0.510          & 1035.474        & 8.341           & 0.021          & 0.462          & 1007.901        & 8.853           & 0.030          & 0.402          & 950.167         \\
			HaLRTC     & 17.753          & 0.593          & 0.778          & 341.075         & 22.228          & 0.707          & 0.839          & 203.733         & 28.175          & 0.861          & 0.917          & 102.896         \\
			TNN        & 31.916          & 0.912          & 0.949          & 69.625          & 34.771          & 0.947          & 0.969          & 51.419          & 38.661          & 0.973          & 0.984          & 33.961          \\
			PSTNN      & 18.718          & 0.467          & 0.743          & 307.351         & 31.556          & 0.910          & 0.948          & 71.712          & 37.947          & 0.970          & 0.982          & 36.886          \\
			DCTTNN     & 31.760          & 0.907          & 0.949          & 70.430          & 34.041          & 0.939          & 0.966          & 55.074          & 37.592          & 0.968          & 0.982          & 37.638          \\
			TTNN       & 32.758          & 0.929          & 0.959          & 62.807          & 36.189          & 0.963          & 0.979          & 43.003          & 40.609          & 0.984          & 0.991          & 26.179          \\
			FTNN       & 28.609          & 0.847          & 0.915          & 98.810          & 31.028          & 0.903          & 0.946          & 75.189          & 34.058          & 0.945          & 0.968          & 53.542          \\
			WSTNN      & 33.617          & 0.950          & 0.963          & 58.507          & 37.481          & 0.975          & 0.981          & 38.595          & 42.447          & 0.989          & 0.993          & 22.472          \\
			t-$\epsilon$-LogDet & 24.161          & 0.720          & 0.865          & 163.363         & 36.073          & 0.952          & 0.972          & 44.400          & 40.512          & 0.979          & 0.987          & 27.869          \\
			TMCP       & 31.909          & 0.912          & 0.940          & 67.109          & 33.974          & 0.937          & 0.956          & 52.922          & 36.086          & 0.959          & 0.971          & 41.543          \\
			MPCP       & \textbf{37.658} & \textbf{0.971} & \textbf{0.981} & \textbf{39.040} & \textbf{42.075} & \textbf{0.986} & \textbf{0.991} & \textbf{23.979} & \textbf{46.931} & \textbf{0.994} & \textbf{0.996} & \textbf{13.903} \\ \hline
		\end{tabular}%
	}
\end{table}
\begin{figure}[!tbh] 
	\centering  
	\vspace{0cm} 
	\includegraphics[width=0.9\linewidth]{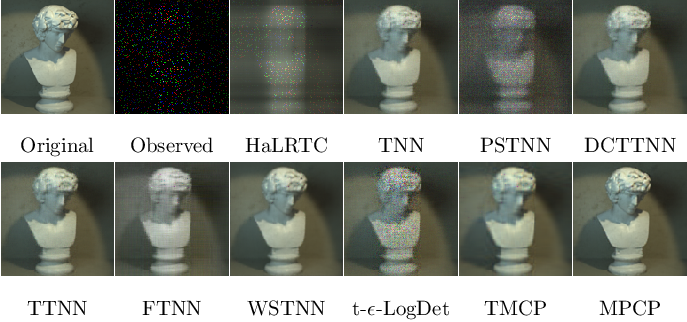} 
	\caption{Visual results of LFI data at a sampling rate of 5\%. The image located at the (3, 3)-th grid of Antinous}
	\label{LFIF1}
\end{figure}
\begin{figure}[!tbh] 
	\centering  
	\vspace{0cm} 
	\includegraphics[width=0.9\linewidth]{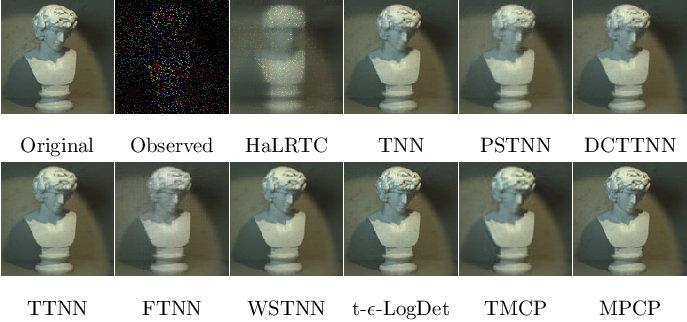} 
	\caption{Visual results for MRI data. MR: top row is 95\%, and last  row is 90\%. The corresponding slices in each row are: 50, 100.}
	\label{LFIF2}
\end{figure}
\begin{figure}[!tbh] 
	\centering  
	\vspace{0cm} 
	\includegraphics[width=0.9\linewidth]{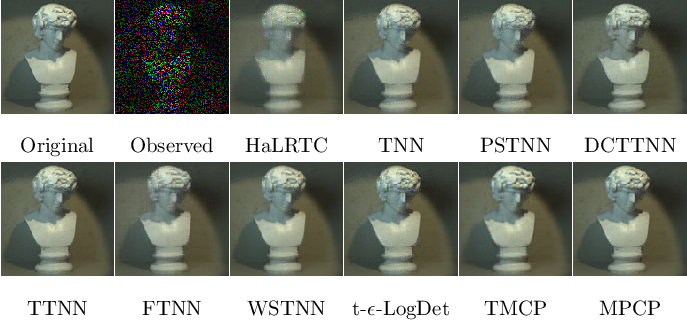} 
	\caption{Visual results for MRI data. MR: top row is 95\%, and last  row is 90\%. The corresponding slices in each row are: 50, 100.}
	\label{LFIF3}
\end{figure}

\subsection{Discussion}
\subsubsection{Parameters setting}
The proposed method utilizes the N-tubal rank framework, which allows for the selection of the parameters $\beta$ and $\rho$ based on the parameter settings from \cite{2020170}. In this section, we analyze the impact of different parameters on the tensor $p$-th order $\tau$ norm. All tests are conducted using the MSI dataset `` chart\_and\_stuffed\_toy ''.

Given that the choice of parameter $\tau$ depends on the value of $p$, the performance of the proposed method is evaluated at a 5\% sampling rate under various settings of $p$ and $\tau^p$. The parameter $p$ is selected from the set $\{0.1: 0.1: 0.9\}$, and $\tau^p$ is chosen from the set $\{50, 100, 150, 200\}$. The PSNR values corresponding to different conditions are presented in Table \ref{TPa1}.
\begin{table}[!tbh]
	\caption{The PSNR value with different $\tau^p$ and $p$ of tensor $p$-th order $\tau$ norm.}
	\label{TPa1}
	\centering
	\medskip\small\renewcommand{\arraystretch}{1.15}
	\resizebox{\textwidth}{!}{%
		\begin{tabular}{||c|ccccccccc||}
			\hline
			\diagbox{$\tau^p$}{$p$}   & 0.9             & 0.8             & 0.7             & 0.6             & 0.5             & 0.4             & 0.3             & 0.2             & 0.1             \\ \hline
			50  & 16.899          & 16.901          & 16.903          & 16.907          & 16.911          & 16.914          & 16.918          & 16.924          & 16.942          \\
			100 & \textbf{34.734} & \textbf{34.824} & \textbf{34.923} & \textbf{35.031} & \textbf{35.143} & \textbf{35.258} & \textbf{35.362} & \textbf{35.441} & 35.484          \\
			150 & 34.371          & 34.473          & 34.590          & 34.724          & 34.869          & 35.037          & 35.204          & 35.359          & \textbf{35.508} \\
			200 & 34.086          & 34.202          & 34.332          & 34.479          & 34.639          & 34.812          & 34.993          & 35.171          & 35.338          \\ \hline
		\end{tabular}%
	}
\end{table}

As shown in the table, recovery performance improves significantly as the value of $p$ decreases. Specifically, when $p=0.1$, the PSNR value increases by 0.75 dB compared to when $p=0.9$. This finding supports our theoretical analysis: for the same $\tau^p$, the penalty applied to smaller singular values increases as $p$ decreases. Based on this observation, the value of $p$ is selected to be 0.1. Additionally, Table \ref{TPa2} lists the optimal $\tau^p$ values for different images across various sampling rates.

\begin{table}[!tbh]
	\centering
	\caption{The optimal $\tau^p$ values for different images at various sampling rates.}
	\label{TPa2}
	\medskip\small\renewcommand{\arraystretch}{1.15}
		\begin{tabular}{||c|ccc|c|ccc||}
			\hline
			\diagbox{Image}{SR}                        & 5\% & 10\% & 20\% & \diagbox{Image}{SR}       & 5\% & 10\% & 20\% \\ \hline
			chart\_and\_stuffed\_toy  & 120 & 70   & 40   & MRI        & 80  & 50   & 30   \\
			fake\_and\_real\_lemons   & 50  & 30   & 20   & akiyo      & 120 & 50   & 50   \\
			fake\_and\_real\_tomatoes & 50  & 30   & 20   & coastguard & 330 & 180  & 90   \\
			real\_and\_fake\_peppers  & 60  & 40   & 30   & hall       & 290 & 100  & 60   \\
			sponges                   & 110 & 80   & 60   & antinous   & 140 & 90   & 80   \\
			thread\_spools            & 70  & 40   & 30   &            &     &      &      \\ \hline
		\end{tabular}
\end{table}

\subsubsection{Ablation study}

In this section, the advantages of the MPCP function over the traditional MCP function are further validated. A comparison is made with a variant model that uses the MCP function (referred to as the MCP method). Quantitative results for various images and sampling rates are presented in Table \ref{TAbs}. The results demonstrate that the MPCP method outperforms the MCP method. Moreover, at higher sampling rates, the performance gap between the MPCP and MCP methods continues to expand, further supporting our theoretical analysis. Specifically, the MPCP function not only protects larger singular values more effectively but also penalizes smaller singular values more efficiently, thereby addressing the MCP function's limitation in penalizing small singular values.

\begin{table}[!tbh]
	\centering
	\caption{The quantitative results at different images and sampling rates}
	\label{TAbs}
	\medskip\small\renewcommand{\arraystretch}{1.15}
	\resizebox{\textwidth}{!}{%
		\begin{tabular}{||c|c|cc|cc|cc|cc||}
			\hline
			& Image  & \multicolumn{2}{c|}{chart\_and\_stuffed\_toy} & \multicolumn{2}{c|}{MRI}   & \multicolumn{2}{c|}{akiyo} & \multicolumn{2}{c||}{antinous} \\ \hline
			SR                  & Method & MCP               & MPCP                      & MCP     & MPCP             & MCP     & MPCP             & MCP       & MPCP              \\ \hline
			\multirow{4}{*}{5\%}  & PSNR   & 34.555            & \textbf{35.553}           & 29.112  & \textbf{30.573}  & 36.041  & \textbf{36.849}  & 35.545    & \textbf{37.658}   \\
			& SSIM   & 0.950             & \textbf{0.960}            & 0.823   & \textbf{0.855}   & 0.974   & \textbf{0.977}   & 0.962     & \textbf{0.971}    \\
			& FSIM   & 0.958             & \textbf{0.964}            & 0.875   & \textbf{0.895}   & 0.983   & \textbf{0.986}   & 0.973     & \textbf{0.981}    \\
			& ERGAS  & 75.911            & \textbf{67.630}           & 137.366 & \textbf{115.228} & 41.464  & \textbf{38.048}  & 48.398    & \textbf{39.040}   \\ \hline
			\multirow{4}{*}{10\%} & PSNR   & 40.445            & \textbf{42.215}           & 32.058  & \textbf{33.700}  & 40.228  & \textbf{41.689}  & 40.279    & \textbf{42.075}   \\
			& SSIM   & 0.984             & \textbf{0.989}            & 0.898   & \textbf{0.920}   & 0.989   & \textbf{0.991}   & 0.983     & \textbf{0.986}    \\
			& FSIM   & 0.987             & \textbf{0.990}            & 0.917   & \textbf{0.932}   & 0.992   & \textbf{0.994}   & 0.988     & \textbf{0.991}    \\
			& ERGAS  & 38.898            & \textbf{31.639}           & 97.716  & \textbf{80.246}  & 26.498  & \textbf{22.814}  & 28.874    & \textbf{23.979}   \\ \hline
			\multirow{4}{*}{20\%} & PSNR   & 48.004            & \textbf{50.796}           & 35.561  & \textbf{37.339}  & 44.601  & \textbf{46.596}  & 44.989    & \textbf{46.931}   \\
			& SSIM   & 0.996             & \textbf{0.998}            & 0.950   & \textbf{0.962}   & 0.995   & \textbf{0.996}   & 0.992     & \textbf{0.994}    \\
			& FSIM   & 0.997             & \textbf{0.998}            & 0.953   & \textbf{0.963}   & 0.997   & \textbf{0.998}   & 0.995     & \textbf{0.996}    \\
			& ERGAS  & 17.120            & \textbf{12.361}           & 64.788  & \textbf{52.177}  & 16.582  & \textbf{13.501}  & 17.153    & \textbf{13.903}   \\ \hline
		\end{tabular}%
	}
\end{table}
\section{Conclusion}

This paper proposes a novel non-convex function, the MPCP function. The MPCP function not only effectively protects large singular values but also imposes a strong penalty on small singular values, addressing a key limitation of the MCP function in penalizing smaller singular values. Both theoretical analysis and experimental results demonstrate the superiority of the MPCP function over the MCP function. As a non-convex relaxation for the LRTC problem, the tensor $p$-th order $\tau$ norm is derived. We then investigate the LRTC model based on the MPCP function and its corresponding solution algorithm. Extensive experiments reveal that the proposed method outperforms the comparison methods in both visual and numerical quantitative results. In the future, we aim to explore the application of the proposed function in tensor robust principal component analysis (TRPCA), specifically examining how its properties may change depending on the problem at hand. Additionally, given the use of the $l_1$ norm in TRPCA, we intend to investigate whether the proposed function can serve as a non-convex relaxation and analyze its associated properties.

\section*{Acknowledgments}

This work was supported by the National Nature Science Foundation of China under Grant 12471353.
\end{document}